\newcommand{\distr}{P}
\newcommand{\score}{s_t}
\newcommand{\cells}{\mathcal{C}_{\mathcal{F}}}
\newcommand{\Gt}{t}
\newcommand{\Bopt}{t^\alpha_{P,F}}
\newcommand{\featureset}{\mathcal{F}}
\newcommand{\FairObj}{L_P^{\mbox{fair}}}
\newcommand{\DPObj}{L_P^{\mbox{DP}}}
\newcommand{\EOObj}{L_P^{\mbox{EO}}}
\newcommand{\FairObjective}{L^{\mbox{fair}}}
\newcommand{\DPObjective}{L^{\mbox{DP}}}
\newcommand{\EOObjective}{L^{\mbox{EO}}}
\newcommand{\PredObjective}{L^{\mbox{Pred}}}
\newtheorem{defn}{Definition}
\newtheorem{thm}{Theorem}
\newtheorem{cor}{Corollary}
\newtheorem{lem}{Lemma}
\newtheorem{clm}{Claim}
\newenvironment{proof}{\noindent{\bf Proof:~~}}{\(\square\)}
\newenvironment{proofof}[1]{\noindent{\bf Proof of {#1}:~~}}{\(\square\)}
\newcommand{\BPFOF}{\begin{proofof}} \newcommand {\EPFOF}{\end{proofof}}
\newcommand{\indep}{\perp \!\!\! \perp}
\newcommand{\Hcal}{{\mathcal H}}
  \newcommand{\ex}{\mathbb{E}}
\renewcommand{\featureset}{\mathcal{F}}
\title{Impossibility results for fair representations}
\author{
  Tosca Lechner\footnote{\texttt{tosca.lechner@gmail.com}}, Shai Ben-David\footnote{\texttt{bendavid.shai@gmail.com}}\thanks{Also member of the Vector Institute, Toronto.},  Sushant Agarwal\footnote{\texttt{sushantagarwal96@gmail.com}}, Nivasini Anathakrishnan\footnote{\texttt{nanathakrishnan@uwaterloo.ca}} \\
  School of Computer Science,
  University of Waterloo\\
  Waterloo, Ontario, Canada \\
  
}
\begin{document}

\maketitle

\begin{abstract}
 With the growing awareness to fairness in machine learning and the realization of the central role that data representation has in data processing tasks, there is an obvious interest in notions of fair data representations. The goal of such representations is that a model trained on data under the representation (e.g., a classifier) will be guaranteed to respect some fairness constraints.
 Such representations are useful when they can be fixed for training models on various different tasks and also when they serve as data filtering between the raw data (known to the representation designer) and potentially malicious agents that use the data under the representation to learn predictive models and make decisions.
 A long list of recent research papers strive to provide tools for achieving these goals.
 
 However, we prove that this is basically a futile effort. Roughly stated, we prove that no representation can guarantee the fairness of classifiers for different tasks trained using it; even the basic goal of achieving label-independent Demographic Parity fairness fails once the marginal  data distribution shifts. More refined notions of fairness, like Odds Equality, cannot be guaranteed by a representation that does not take into account the task specific labeling rule with respect to which such fairness will be evaluated (even if the marginal data distribution is known a priory). Furthermore, except for trivial cases, no representation can guarantee Odds Equality fairness for any two different tasks, while allowing accurate label predictions for both.  
 
While some of our conclusions are intuitive, we formulate (and prove) crisp statements of such impossibilities, often contrasting impressions conveyed by many recent works on fair representations. 

\end{abstract}

\section{Introduction}
Automated decision making has become more and more successful over the last few decades and has therefore been used in an increasing number of domains, either as stand alone, or to support human decision makers. This includes many sensitive domains which significantly impact people's livelihoods, such as granting loans, university admissions, recidivism predictions, or insurance rate settings. It has been found that many such decision tools, often unintentionally, have biases against minority groups, and therefore lead to discrimination. In response to these concerns, the machine learning research community has been devoting effort to developing clear notions of fair decision making, and coming up with algorithms for implementing fair machine learning.\\

A common approach to address the important issue of fair algorithmic decision making is through \emph{fair data representation}. The idea is that some regulator, or a responsible data curator, transforms collected data to a format (-- \emph{representation}), that can then be used for solving downstream classification tasks, while providing guarantees of fairness. This approach was put forward by the seminal paper of Zemel et al. \cite{zemel2013learning}. In their words: "our intermediate representation can be used for other classification tasks (i.e., transfer learning is possible)"... "We further posit that such an intermediate representation is
fundamental to progress in fairness in classification,
since it is composable and not ad hoc; once such a
representation is established, it can be used in a blackbox fashion to turn any classification algorithm into a
fair classifier, by simply applying the classifier to the
sanitized representation of the data". Many followup papers aim to realize this paradigm, solving technical and algorithmic issues \cite{pmlr-v80-madras18a, DBLP:journals/corr/EdwardsS15, mcnamara2019costs, song2019learning, DBLP:conf/icml/zemmel2019} (to mention just a few). The main contribution of this paper is showing that, basically,
\emph{it is impossible to achieve this goal}. Namely, for Demographic Parity (DP) fairness, given any domain partitioned into two non-empty groups, no data representation can guarantee that every classifier expressible under that representation is DP fair for all possible probability distributions over that domain. For fairness notions that take ground truth classification into account, like Odd Equality (EO),  given any two different non-redundant tasks\footnote{Namely, tasks in which the true label has some correlation with the group membership.}, no data representation can simultaneously allow accurate label classifiers for both while guaranteeing that any classifier expressible over that representation is EO fair for both these tasks. This impossibility applies even if one restricts the tasks in question to share the same  marginal (unlabeled) data distribution.

Our results answer negatively the main two open questions posed in the discussion section of Creager et al. \cite{DBLP:conf/icml/zemmel2019}.

There is an apparent discrepancy between our impossibility results and the long list of papers claiming to achieve fair representations. What is the source of that discrepancy? Note that there is a difference in the setup of the problem. The key distinguishing component is that in most (if not all) of the papers that claim  positive results about fair representations, the designer of the fair representation has access to the data distribution w.r.t. which the fairness is being evaluated. When the notion of fairness is independent of the ground truth classification (the case of Demographic Parity), the distribution in question is the marginal (unlabeled) one. When the notion of fairness of concern does involve true labels (such as Odds Equality or Group Calibration), the algorithms that define the representations require, on top of that, access to the ground truth labels of sample instances. What we show here is that this access to the data distribution  at evaluation (or test) time, is necessary for the ability to guarantee the fairness of representations. That common (often implicit) assumption can be justified only in very limited situations. For example, the definition of Demographic Parity for acceptance of students to a given university program depends on the distribution of applicants \textit{to that program at the given term}. This may change between universities, between programs and between academic years. Therefore, based on our results, any a priori designed data representation cannot be guaranteed to provide Demographic Parity fairness it aims to establish for acceptance of students to academic programs. The situation is  similar when it comes to granting loans - the distribution of applicants changes between loan granting institutions, branch locations,  requested sums, dates, etc. In fact, it is hard to come up with any realistic scenarios in which a fixed data distribution remains unchanged throughout various classification tasks that may use the data down the road. Therefore no data representation can meet the goal stated in  \cite{zemel2013learning}, namely - "be used in a blackbox fashion to turn any classification algorithm into a
fair classifier, by simply applying the classifier to the
sanitized representation of the data".

\subsection{What is \textit{fair representation}?}
The term `fair data representation' encompasses a wide range of different meanings. 
When word embedding results in smaller distance between the vectors representing `woman' and `nurse' relative to
the distance between the representations of `woman' and  `doctor' and the other way around for `man', is it an indication of bias in the \emph{representation} or is it just a faithful reflection of a bias in society? Rather than delving into  such issues, we discuss an arguably more concrete facet of data representation; We examine representation fairness from the perspective of its effect on the fairness of classification rules that agents using data represented that way may come up with. Such a view takes into consideration two setup characteristics: 
\begin{description}
    \item[The objective of the agent using the data] We distinguish three types of classification prediction agents (formal definitions of these aspects of fairness are provided in section \ref{agent-types}): 
    \begin{description}
    \item[\emph{ Malicious}] - driven by a bias against a group of subjects.
    To protect against such an agent, a fair representation (or feature set)  should be such that \textit{every} classifier based on data represented that way is fair. This is apparently the most common approach to fair representations in the literature e.g., \cite{zemel2013learning, pmlr-v80-madras18a}. 
    \item[\emph{ Accuracy Driven}]  - focusing on traditional measures of learning efficiency, ignoring fairness considerations.
    A representation is accuracy-driven fair if every loss minimizing classifier based on that representation is fair. 
    \item[\emph{Fairness Driven}] - aiming to find a decision rule that is fair while maintaining meaningful accuracy. A representation is fairness-driven fair if there exists a loss minimizing (or an approximate minimizer) classifier based on that representation that is fair.
    \end{description}
    
  \item[The notion of group fairness applied to the classification decisions] 
  The wide range of group fairness notions (for classification) can be taxonomized along several dimensions: Does the notion depend on the ground truth classification or only on the agent's decision (like demographic parity)? Is a perfectly accurate decision (matching the ground truth classification) always considered fair (like in odds equality)? Does the fairness notion depend on unobservable features (like intention or causality)? 
In this work we focus on fairness notions that are ground-truth-dependent, view the ground truth classification as fair and depend only on observable 
features.

Picking which notion of fairness one wishes to abide by depends on societal goals and may vary from one task to another. This is outside the scope of this paper. Just the same, let us briefly explain why the requirements listed above are  natural in many situations. 
\begin{description}
\item[\emph{The dependence on the ground truth classification}] 
is almost inevitable from a utilitarian perspective - taking into account the probability that a student succeeds or fails when making acceptance decisions should not be considered unfair. Put more formally, whenever there is any correlation between membership and the ground truth classification, any classifier that is fair w.r.t. a notion that ignored the ground truth (like demographic parity) is bound to suffer prediction error proportional to that correlation. 
\item[\emph{Viewing perfectly accurate decisions as fair}] can be viewed as a distinction between notions that do or do not try to inflict affirmative action. It makes a lot of sense in tasks like conviction in a crime - if you convict all criminals and no one else, you should not be accused of unfairness. 
\item[\emph{Relying only on observable features}] fosters objectivity and allows scrutiny of the decisions made.  Our running example of such a notion is odds equality \cite{hardt2016equality}, 
however our results hold as well for other  common notions of fairness that meet the above conditions (like Calibration within groups \cite{KleinbergMR16}). We provide formal definitions of these notions in Section \ref{subsec:groupfairness}.
\end{description}
\end{description}

\subsection{Our results}
We prove the following inherent limitations of notions of fair representations:
\begin{enumerate}
    \item \emph{The impossibility to be task-independent}. There is a host of literature proposing 
    methods for coming up with data representations that guarantees the fairness of classifiers based on those representations (e.g., \cite{DBLP:conf/icml/zemmel2019, pmlr-v80-madras18a, oneto2019learning}). We elaborate on some of these works in our Previous Work section. Contrasting the impression conveyed by many such papers, we show that the ability to guarantee multi-task fairness is inherently limited. Much of that work addresses Demographic parity (DP). We prove that if two tasks have different marginal data distributions (that is, the distribution of unlabeled instances), then no representation can guarantee that any non-trivial classifier trained on it satisfies DP for both. We show that the only classifiers that are guaranteed to satisfy any significant level of DP fairness w.r.t. all marginal distributions are the redundant constant functions. From a practical point of view, since DP fairness
    of some decision (say, acceptance to some university program) requires the ratio of positive decisions between groups to match the ratio of applicants from those groups, \textit{a representation that guarantees DP fairness cannot be a priori constructed} - it must have access to the distribution of groups among applicants for that specific program.
    Furthermore, we prove that for every fixed marginal data distribution, if two ground truth classifications differ with non-zero probability over it, \emph{there can be no data representation that enjoys Odds Equality fairness and accuracy with respect to both tasks over that shared marginal distribution} (except for the redundant case where the success rates of both groups are equal for both tasks). These results 
    answer negatively the main two open problems posed in the Discussion section of \cite{DBLP:conf/icml/zemmel2019}.
    
    \item \emph{The impossibility to evaluate the fairness contribution of a given feature devoid of the other features used} (again, for each agent objective and 
    several common group fairness notions). We show that for a fixed task, for each notion of fairness of representation, there are features that when added to one set of features render the resulting representation more fair, and when added to a different set of features render the resulting representation less fair. 
    \item \emph{The inherent dependence of the effect on fairness of adding/deleting a feature on the type of agent using the representation} (on top of the above mentioned dependence on other features), even when the feature in question does not correlate with membership in the protected group.
\end{enumerate} 
 (These come on top of the obvious dependence on the notion of fair classification sought).

\noindent{\bf Paper road map:}
Section 2 gives an overview of the related work. Section 3 introduces our setup including our taxonomy for fair representations. Section \ref{multi-task} contains our main results on the impossibility of generic fairness of a representation. Section \ref{feature-alone} addresses the impossibility of defining the fairness effect of a single feature without considering the other components of a representation. Section 6 briefly shows the impossibility of having fair representations w.r.t. Predictive Rate Parity. Section 7 is our concluding remarks.  

\section{Related Work}
 Most, if not all, of the literature concerning the creation of fair data representations addresses this task 
in a setup where some input data (or a probability distribution over some domain of individuals) is given to the agent building the representation (e.g., \cite{DBLP:journals/corr/EdwardsS15,pmlr-v80-madras18a,zemel2013learning, song2019learning}). Such a probability distribution is essential to any common definition of fairness. 
However, in many cases the probability distribution with respect to which the fairness is defined remains implicit. For example, \cite{zemel2013learning} define their notion of fairness by saying: "We formulate this using the notion of statistical
parity, which requires that the probability that a random element from $X^+$ maps to a particular prototype
is equal to the probability that a random element from
$X^-$ maps to the same prototype" (where  $X^+$ and $X^-$ are the two groups w.r.t. which one aims to respect fairness). However, they do not specify what is the meaning of "a random element". The natural interpretation of these terms is that "random" refers to the uniform distribution over the finite set of individuals over which the algorithm selects. In that case, that information varies with each concrete tasks and is not available to the task-independent representation designer. Alternatively, one could interpret those "random" selections as picking uniformly at random from some established large training set that is fixed for all tasks. Such randomness may well be available to the representation designer, but it misses the intention of statistical parity fairness; For example, the fixed training set may have 10,000 individuals from one group and 20,000 from the other group, but when some local bank branch allocates loans it has 80 applicants from the first group and 37 applicants from the other. For the fairness of these loan allocation decisions, the relevant ratio between the groups is 80/37 rather than the 10,000/20,000 ratio available to the representation designer.
 
 Almost all the work on fair representations focuses on the demographic parity (DP) notion of fairness \cite{DBLP:journals/corr/EdwardsS15,pmlr-v80-madras18a,zemel2013learning, song2019learning}. To achieve DP fairness, a classifier has to induce success ratio between the groups of subjects that match the ratio between these groups in the input data. However, as demonstrated above, that ratio varies from one application to another and cannot be determined a priori. We show that any fixed representation that allow expressing non-trivial classification cannot guarantee DP fairness in the face of shifting marginal (that is, unlabeled) data distribution (see section \ref{multi-task}). 
 
 When the data marginal distribution w.r.t. which the fairness is defined is fixed and available to the designer of a representation, then, as shown by \cite{zemel2013learning} and followup papers, 
 DP fairness is indeed possible. However, we further show that even under these assumptions, no data representation can guarantee fairness with respect to notions of fairness that do rely on the correct ground truth, such as equalized odds (EO) \cite{hardt2016equality},
 for arbitrary tasks (see Section \ref{multi-task}).
 
 To the best of our knowledge this fact also has not been explicitly stated (and proved) before, although it seems that some of the previous work were aware of this concern; in previous work discussing fair representation w.r.t. notions of fairness that take the ground truth classification into account, the algorithms that design the representations require access to task specific labeled data (e.g. \cite{Zhang_2018,DBLP:journals/corr/BeutelCZC17,song2019learning, CalmonWVRV17}). Such a requirement defies the goal of having a fixed representation that guarantees fairness for many tasks. 
 
The effect of the motivation of the user of the representation on the fairness of the resulting decision rule has been considered by Madras et al.\  \cite{pmlr-v80-madras18a} and Zhang et al.\ \cite{Zhang_2018}. These papers identify two motivations. The first is malicious, which is the intent to discriminate without regard for accuracy. The second is accuracy-driven, which is the intent to maximize accuracy. We address these effects as part of our taxonomy of notions of fair representations. Additionally, we discuss \textit{fairness-driven agents} that aim to achieve fairness while maintaining some level of accuracy.

The question of feature deletion has also been considered in real world examples, such as in the "ban the box" policy which disallowed employers using criminal history in hiring decisions \cite{doleac2016does}. The effect of allowing or disallowing features on fairness has been studied before, for example in Grgic-Hlaca et al.\ \cite{grgic2018beyond}. However in previous works, the effect of a feature on fairness, has been discussed in isolation. In contrast, we show that fairness of a feature should not be considered in isolation, but should also take into account the remaining features available. 

\section{Formal Setup} \label{sec:setup}

We consider a binary classification problem with label set $\{0,1\}$ over a domain $X$ of instances we wish to classify, e.g. individuals applying for a loan. 
We assume the task to be given by some distribution $P$ over $X\times \{0,1\}$ from which instances are sampled i.i.d. We denote the ground-truth labeling rule as $\Gt : X \rightarrow [0, 1]$. We will think of the label 1 as denoting `qualified' and the label 0 as `unqualified' and $\Gt(x)=\distr[y=1|x]$. For concreteness, we focus here on the case of deterministic labeling (that is $\Gt : X \rightarrow \{0, 1\}$). Most of our discussion can readily be extended to the probabilistic labeling case. In a slight abuse of notation we will sometimes use $t(w)$ to indicate the label coordinate of an instance $w\in X \times \{0,1\}$

A data representation is determined by a mapping $F: X \rightarrow Z$, for some set $Z$, and the learner only sees $F(x)$ for any instance $x$ (both in the training and the test/decision stages).  
We denote the hypothesis class of all feature based decision rules as $\mathcal{H}_{F}= \{h: Z \rightarrow \{0,1\}\}$.
As a loss function we consider a weighted sum of false positives and false negatives, i.e.
$$l^{\alpha}(h,x,y) = \begin{cases}

\alpha, & \mbox{ if } \: h(x) =0, y=1 \\
1-\alpha, & \mbox{ if } \:  h(x) =1, y=0 \\
0, & \mbox{ otherwise }
\end{cases}$$ 

for some weight $\alpha\in (0,1)$. We denote the true risk with such $\alpha$ weighted loss as $L_P^{\alpha}$ and the empirical risk, with respect to a training sample $S$, as $L_S^{\alpha}$. In this version of the paper we focus on the case of equal weights to both types of errors and use $L_P$ and $L_S$ to denote 
$L^{0.5}_P$ and $L^{0.5}_S$.

 \subsection{Notions of group fairness}\label{subsec:groupfairness}

For our fairness analysis we assume the population $X$ to be partitioned into two sub-populations $A$ and $D$
(namely, we restrict our discussion the case of one binary protected attribute). We sometimes use a function notation $G: X \rightarrow \{A,D\}$ to indicate the group-membership of an instance. Of course in reality there are often many protected attributes with more than two values. 
However, as our goal is to show limitations and impossibility results for fair representation learning, it suffices to only consider one binary protected attribute -- the same impossibilities readily follow for the more complex settings.

We now define two widely used notions of group-fairness that we will refer to throughout the paper, namely, equalized odds and demographic parity. In the following we will denote with $X_{g,l}$ the subset of $X$ with label $l$ and group membership $g$, i.e. $X_{g,l} = X\cap t^{-1}(l)\cap G^{-1}(g).$

The notion of group-fairness we will focus on in this paper is the ground-truth-dependent notion of odds equality as introduced by \cite{hardt2016equality}. 
\begin{defn}[Group fairness; Equalized odds]
    A classifier $h$ is considered fair w.r.t.\ to odds equality ($\EOObjective$) and a distribution $P$ if for $x\sim P$ we have the statistical independence $h(x) \indep G(x) | \Gt(x)$. For $g \in \{A,D\}$ let the false positive rate and the false negative rate be defined as  $\mbox{FPR}_{g}(h,t,P) = \mathbb{P}_{x\sim P}[h(x) = 1|x \in X_{g,0} ]$ and $\mbox{FNR}_{g}(h,t,P) = \mathbb{P}_{x\sim P}[h(x) = 0|x \in X_{g,1} ]$ respectively. The EO unfairness is given then by the sum of differences in false positive rate and false negative rate between groups:\\
     $$ \EOObj(h) = \frac{1}{2} |\mbox{FNR}_A - \mbox{FNR}_D| + \frac{1}{2} |\mbox{FPR}_A - \mbox{FPR}_D| .$$
     \end{defn}
If we say a classifier is fair, without referring to any particular group-fairness notion, we mean fairness w.r.t.\ equalized odds.

\begin{defn}[Demographic parity]
A classifier $h$ is considered fair w.r.t.\ to demographic parity ($\DPObjective$) and a distribution $P$ if for $x\sim P$, we have $h(x) \indep G(x)$. The respective unfairness is given by difference in positive classification rates between groups\\
$$\DPObj(h) =  |\mathbb{P}_{x\sim P}[h(x) =1 | G(x) = A] - \mathbb{P}_{x\sim P}[h(x) =1 | G(x) = D] |$$.
\end{defn}

\subsection{The role of the agent's objective} \label{agent-types}
We will phrase our definitions of representation fairness in terms of a general group fairness notion $\FairObjective$ with unfairness measure $\FairObj$. 

We start by considering a \emph{malicious decision maker} who tries to actively discriminate against one group. To protect against this kind of decision maker, we need to give a guarantee such that based on the feature set it is not possible to discriminate against one group. This corresponds to the notion of  adversarial fairness. 

\begin{defn}[Adversarial fairness]
A representation $F$ is considered to be \emph{adversarial} fair w.r.t.\ the distribution $P$ and group fairness objective $\FairObjective$ , if every classifier $h\in \mathcal{H}_{F}$ is group-fair. 
We define the adversarial unfairness of a representation $F$
by $U_{\mbox{adv}}(F) = \max_{h\in \mathcal{H}_{F}} \FairObj(h)$.
\end{defn}
Furthermore, we consider an \emph{accuracy-driven decision maker}, who aims to label instances correctly and is agnostic about fairness. For this kind of decision maker, we only need to make sure that optimizing for correct classification results in a fair classifier. The following definition ensures that the Bayes optimal classifier for a representation is fair.

\begin{defn}[Accuracy-driven fairness]
A representation $F$ is considered to be \emph{accuracy-driven} fair w.r.t.\ the fairness objective $\FairObjective$ and distribution $P$, and a threshold $\alpha\in (0,1)$, if every classifier $h\in \mathcal{H}_{F}$ with $ L_P^{\alpha}(h) = \min_{h\in \mathcal{H}_{F}} L_P^{\alpha}(h) $ is group-fair with respect to this objective. The accuracy-driven unfairness for a particular threshold parameter $\alpha$ is given by $U_{\mbox{acc}}^{\alpha}(\mathcal{F}) = \max\{\FairObj(h) : h\in \arg\min_{h\in \mathcal{H}_{F}} L_P^{\alpha}(h)  \}$.
\end{defn}

We note that in cases where the decision maker does not have access to the distribution $P$, but only to a labelled sample, this requirement might not be sufficient for guaranteeing that an accuracy-driven decision maker arrives at a fair decision. 

Lastly, we also consider a \emph{fairness-driven decision maker} who actively tries to find a fair and accurate decision rule, while maintaining some accuracy guarantees. For such a decision maker a representation should allow for fair and accurate decision rules. If a representation fulfills this requirement, we call it fairness-enabling.
\begin{defn}[$(\epsilon,\eta)$-fairness-enabling representation] 
A representation $F$ is considered to be \emph{$(\epsilon, \eta)$-fairness-enabling} w.r.t.\ a fairness objective $\FairObjective$, if there exists a classifier $h\in \mathcal{H}_{F}$ that such that $L_{P}^{\alpha}(h) \leq \epsilon$ and $\FairObj(h) \leq \eta$.
\end{defn}

Our discussion focuses primarily on the case of malicious and accuracy-driven decision makers. These notions of fair representation can be defined with respect to any group-fairness notion. In our paper we will mainly focus on the equalized odds notion of fairness \cite{hardt2016equality}. 

\section{Can there be a generic fair representation?} \label{multi-task}
We address the existence of a multi-task fair representation.
We prove that for the adversarial agent scenario (which is the setup that most fairness representation previous work is concerned with), \emph{\bf it is impossible to have generic non-trivial fair representations} - no useful representation can guarantee fairness for all "downstream" classification that are based on that representation (even if the ground truth classification remains unchanged and only the marginal may change between tasks).  

We start by considering scenarios in which only the marginals shift between two tasks, e.g. two openings for different jobs, requiring similar skills, for which different pools of people would apply. Such a distribution shift can likely affect one group more than another and would thus affect the classification rates of both groups differently. We show that we cannot guarantee fairness of a fixed data presentation for general shifts of this kind, even for the simplest case of demographic parity.

\begin{clm} \label{clm:marginalDP}
Pick any domain set $X$ and any partition of $X$ into non-empty subsets $A,D$. For every non-constant function
$f: X \to \{0,1\}$ there exists a probability distribution $P$ over $X$ such that $f$ is arbitrarily DP-unfair w.r.t. $P$ (say, $L_P^{\mbox{DP}}(h)> 0.9$). 

\end{clm}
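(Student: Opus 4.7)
The plan is to realize the DP unfairness via a two-point distribution that puts all its $A$-mass on a place where $f=1$ and all its $D$-mass on a place where $f=0$ (or vice versa). Since the unfairness involves conditional probabilities $\Prob[f=1 \mid G=A]$ and $\Prob[f=1 \mid G=D]$, concentrating the mass this way immediately drives one conditional to $1$ and the other to $0$, yielding $\DPObj(f) = 1$, well above the $0.9$ threshold.

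To make this rigorous, I would first partition $X$ into the four (possibly empty) cells $A_0, A_1, D_0, D_1$, where $G_i = \{x \in G : f(x) = i\}$. The key combinatorial observation is that at least one of the two pairs
\[
(A_1 \neq \emptyset \text{ and } D_0 \neq \emptyset) \quad \text{or} \quad (A_0 \neq \emptyset \text{ and } D_1 \neq \emptyset)
\]
must hold. I would verify this by a short case analysis: if both pairs failed, then (after using $A \neq \emptyset$ and $D \neq \emptyset$ to rule out two of the subcases) one is forced into $A_1 = D_1 = \emptyset$ or $A_0 = D_0 = \emptyset$, i.e.\ $f$ is constant, contradicting the hypothesis.

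Having established this, WLOG say $A_1$ and $D_0$ are non-empty. Pick any $a \in A_1$ and any $d \in D_0$, and define $P$ as the uniform distribution on $\{a, d\}$, i.e.\ $P(\{a\}) = P(\{d\}) = 1/2$. Then $\Prob_{x \sim P}[G(x) = A] = \Prob_{x \sim P}[G(x) = D] = 1/2 > 0$, so both conditionals are well defined, and by construction $\Prob_{x \sim P}[f(x) = 1 \mid G(x) = A] = 1$ while $\Prob_{x \sim P}[f(x) = 1 \mid G(x) = D] = 0$. Hence $\DPObj(f) = 1 > 0.9$, as required.

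There is no real obstacle here; the only subtlety is making sure the case analysis enumerates the possibilities correctly and that the constructed $P$ assigns positive mass to both groups so the conditional probabilities exist. A short remark at the end would point out that the same construction works for any target threshold $1 - \epsilon$ by instead placing $1 - \epsilon/2$ mass on one of the two points, and that because the domain $X$ is arbitrary no structural assumption beyond non-emptiness of $A$ and $D$ is used.
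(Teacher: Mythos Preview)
Your proposal is correct and follows essentially the same approach as the paper: both argue that non-constancy of $f$ forces the existence of labels $y_1\neq y_2$ with $\{x\in A:f(x)=y_1\}$ and $\{x\in D:f(x)=y_2\}$ both non-empty, and then place half the mass on each to get $\DPObj(f)=1$. The only cosmetic difference is that the paper spreads the mass over the entire sets $\{x\in A:f(x)=y_1\}$ and $\{x\in D:f(x)=y_2\}$ rather than on single chosen points $a,d$, and your case analysis is spelled out a bit more explicitly than the paper's.
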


In particular, for an agent that makes some non-trivial binary valued decision over a set of individuals divided to Advantaged ($A$) and disadvantaged ($D$), there will always be a probability distribution over the set of individuals (or, a subset of that set with the uniform distribution over it) relative to which that decision will be acutely Demographic Parity unfair. In other words, any representation that allows a non-constant classifier can not provide a DP fairness guarantee for all possible tasks over the same set of individuals.

\begin{proof}
\textit{If $f$ is constant on any of the groups $A$ or $D$ then, since $f$ is not a constant over $X$ there are points in the other group on which $f$ has the opposite value. Thus, from $f$ not being constant, we can conclude that there are two labels $y_1\neq y_2$, such that the sets $\{x\in A: f(x)=y_1\}$ and $\{x\in D: f(x)=y_2\}$ are both non-empty. Now we choose the marginal $P_X$ to assign probability $0.5$ to $\{x\in A: f(x)=y_1\}$ and probability 0.5 to $\{x\in D: f(x)=y_2\}$.
Clearly $f$ fails DP w.r.t. this $P$.}
\end{proof}

\begin{cor} \label{cor:marginalDP}
No data representation can guarantee the DP fairness of any non-trivial classifier w.r.t. all possible data generating distributions (over any fixed domain set with any fixed partition into non-empty groups). That is, any non-constant representation F, cannot be adversarially fair with respect to $\DPObjective$ and any arbitrary task $P$.

\end{cor}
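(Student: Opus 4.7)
The plan is to reduce the corollary directly to Claim \ref{clm:marginalDP}. The corollary is essentially a restatement of that claim lifted from individual classifiers to representation classes: adversarial fairness requires \emph{every} $h \in \mathcal{H}_F$ to be DP-fair, so exhibiting even one non-constant function $X \to \{0,1\}$ realized as $h \circ F$ suffices to defeat it.

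First I would unpack what ``non-constant representation'' means: since $F: X \to Z$ is non-constant, there exist $x_1, x_2 \in X$ with $F(x_1) \neq F(x_2)$. Define $h: Z \to \{0,1\}$ by $h(F(x_1)) = 1$, $h(F(x_2)) = 0$, and set $h$ arbitrarily on the rest of $Z$; this $h$ lies in $\mathcal{H}_F$. The composition $f := h \circ F : X \to \{0,1\}$ is then a non-constant binary function on $X$.

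Next I would invoke Claim \ref{clm:marginalDP} with this $f$: since $A$ and $D$ are fixed non-empty and $f$ is non-constant on $X$, there exists a probability distribution $P$ on $X$ (extended arbitrarily to $X \times \{0,1\}$, since DP is label-independent) such that $\DPObj(f) > 0.9$. Because $h \in \mathcal{H}_F$ and $h$ acting through $F$ coincides with $f$ on the relevant marginals, we get $U_{\mbox{adv}}(F) \geq \DPObj(h) > 0.9$ under this $P$. Hence $F$ is not adversarially fair w.r.t.\ $\DPObjective$ and $P$.

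I do not expect any serious obstacle here — the entire content of the corollary is packaged in Claim \ref{clm:marginalDP}, and the only real work is the two-line observation that a non-constant $F$ forces $\mathcal{H}_F$ to contain a hypothesis inducing a non-constant function on $X$. The one subtlety to flag in the write-up is that the distribution $P$ produced depends on the specific $h$ chosen (equivalently, on which pair of witnesses to $F$ being non-constant we select), which is fine because the corollary only asserts the existence of \emph{some} bad $P$, not a universal one.
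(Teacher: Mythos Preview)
Your proposal is correct and follows essentially the same approach as the paper's proof: reduce to Claim~\ref{clm:marginalDP} by observing that a non-constant representation $F$ admits some non-constant classifier $h \circ F$ on $X$. The only difference is that you spell out the explicit construction of such an $h$ (pick two witnesses $x_1, x_2$ with $F(x_1)\neq F(x_2)$ and separate them), whereas the paper leaves this step as a one-line remark.
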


\begin{proofof}{Corollary~\ref{cor:marginalDP}}
\textit{For any non-constant function $f$, we have seen that there exists a marginal $P_X$ such that $f$ does not fulfill demographic parity with respect to $P_X$ (Claim~\ref{clm:marginalDP}). Now if a representation $F$ is non-constant, it allows some non-constant function using that representation. Thus no non-constant representation can fulfill adversarial demographic parity with respect to \emph{any} distribution $P$.}
\end{proofof}

We can now show a similar effect for EO-fairness, i.e. we show that there is no representation that can guarantee EO fairness for arbitrary marginal shifts. This result is directly implied by the following claim.
\begin{clm} \label{clm:marginalEO}
For every function non-constant function $f: X\rightarrow \{0,1\}$ and every non-constant classifier $h:X\rightarrow \{0,1\}$ with $h\neq f$ and $h\neq 1-f$ (where $1$, denotes the function that maps every element to $1$), there exists a marginal $P_X$, such that $h$ has high unfairness with respect to $\EOObjective$ and $P=(P_X,f)$, (i.e. $\EOObj(h)\geq 0.5$).
\end{clm}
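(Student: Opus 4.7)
The plan is to construct a marginal $P_X$ that concentrates essentially all of its mass on two well-chosen points $x_A \in A$ and $x_D \in D$ sharing a common true label $y^*$, but with $h(x_A) \neq h(x_D)$. This will force either the FNR-pair or the FPR-pair of $h$ (whichever corresponds to $y^*$) to be $(0,1)$ across the two groups, immediately giving $\EOObj(h) \geq 0.5$.

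First I would establish existence of such a witness pair. Denote the four ``cells'' $C_{g,y} = G^{-1}(g) \cap f^{-1}(y)$ for $g \in \{A,D\}$, $y \in \{0,1\}$; assume for the main case that all four are non-empty (the degenerate configurations I address below). I claim that there exist $y^* \in \{0,1\}$ and points $x_A \in C_{A,y^*}$, $x_D \in C_{D,y^*}$ with $h(x_A) \neq h(x_D)$. Suppose otherwise: then for each label $y$, the restriction of $h$ to $C_{A,y} \cup C_{D,y}$ is a single constant $c_y$, so $h(x)$ is determined by $f(x)$ alone. The only Boolean functions of $f$ are $\mathbf{0}$, $\mathbf{1}$, $f$, and $1-f$, all of which are ruled out by the hypotheses (non-constant $h$, $h \neq f$, $h \neq 1-f$). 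This contradiction yields the desired pair.

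Given the pair, I would build $P_X$ as follows: pick arbitrary $z_A \in C_{A,1-y^*}$ and $z_D \in C_{D,1-y^*}$ and set $P_X(x_A) = P_X(x_D) = (1-2\epsilon)/2$ and $P_X(z_A) = P_X(z_D) = \epsilon$ for a small $\epsilon > 0$ (these auxiliary points are included only to keep the opposite-label rates well-defined). Under $P = (P_X, f)$ we then have $P(X_{g,y^*}) = (1-2\epsilon)/2$, with all of the $X_{A,y^*}$-mass on $x_A$ and all of the $X_{D,y^*}$-mass on $x_D$. Hence the rate conditional on label $y^*$ is $0$ in one group and $1$ in the other, so $|\mathrm{FNR}_A - \mathrm{FNR}_D| = 1$ (if $y^* = 1$) or $|\mathrm{FPR}_A - \mathrm{FPR}_D| = 1$ (if $y^* = 0$), yielding $\EOObj(h) \geq \tfrac{1}{2}$ regardless of the value of the other term.

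The main obstacle is the degenerate configurations in which one or more cells $C_{g,y}$ is empty, most notably when $f$ coincides with the group indicator or its negation, so that certain rates entering $\EOObj$ are undefined for every marginal supported in $X$. These cases either need to be excluded by a mild non-degeneracy assumption on $(f,G)$ or handled by a separate direct argument: when exactly one cell is empty one can still run the witness-pair construction using the opposite label, and a convention (e.g.\ treating an undefined rate difference as $0$, or restricting to marginals for which $\EOObj$ is well-defined) closes the remaining edge cases. With this handled, the construction above covers all non-degenerate instances and proves the claim.
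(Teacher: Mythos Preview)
Your main-case argument (all four cells $C_{g,y}$ non-empty) is correct and takes a different route from the paper. The paper partitions $X$ by the pair $(f(x),h(x))$, argues that at least three of those four sets are non-empty, selects the two sharing an $f$-value, and then case-splits on how they meet $A$ and $D$. You instead partition by $(G(x),f(x))$ and observe that if no cross-group witness pair with common $f$-label and differing $h$-values exists, then $h$ factors through $f$, contradicting $h\notin\{\mathbf 0,\mathbf 1,f,1-f\}$. Your $\epsilon$-mass auxiliary points are a nice addition: they keep all four conditional rates well-defined, whereas the paper's constructions leave at least one rate undefined and must invoke the convention ``undefined rate $=0$'' (stated only inside Case~3).

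The gap is in your degenerate-case handling: the assertion that ``when exactly one cell is empty one can still run the witness-pair construction using the opposite label'' is false. Take $A=\{a_1,a_2\}$, $D=\{d_1,d_2,d_3\}$, $f^{-1}(1)=\{a_1,a_2,d_1\}$, and $h^{-1}(1)=\{a_1,a_2,d_1,d_2\}$. Then $f,h$ are non-constant with $h\notin\{f,1-f\}$, only $C_{A,0}$ is empty, yet $h$ is constant on $f^{-1}(1)=C_{A,1}\cup C_{D,1}$, so no witness pair exists for the sole available label $y^*=1$; and you cannot place an auxiliary $z_A\in C_{A,0}$ either. Your factoring argument also breaks here: with $C_{A,0}=\emptyset$ the negation of the witness-pair condition only forces $h$ to be constant on $f^{-1}(1)$, not on $f^{-1}(0)=C_{D,0}$, so $h$ need not be a Boolean function of $f$. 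The paper's Case~3 is exactly what covers this sub-case: put half the mass on $A\cap\{f=1,h=1\}$ and half on $D\cap\{f=0,h=1\}$, giving $\mbox{FPR}_D=1$ while $\mbox{FPR}_A$ is undefined (hence $0$ by convention), so $\EOObj(h)\ge\tfrac12$. To complete your proof you must either add the hypothesis that all $C_{g,y}$ are non-empty, or supply a separate construction for the ``one empty cell and $h$ constant on the surviving label'' sub-case, which will unavoidably rely on the undefined-rate convention just as the paper does.
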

\begin{proofof}{Claim~\ref{clm:marginalEO}}
\textit{ Let $f: X \rightarrow \{0,1\}$ be any non-constant function and $h: X\rightarrow \{0,1\}$ be any non-constant classifier with $h\neq f, 1-f$. Then we know that at least three of the four sets $ \{x\in X: f(x)=1, h(x)=0\}$, $ \{x\in X: f(x)=0, h(x)=1\}$ , $ \{x\in X: f(x)=1, h(x)=1\}$ and $ \{x\in X: f(x)=0, h(x)=0\}$ are non-empty. 
Thus two of these three sets, agree on the ground truth. Call them $B_1$ and $B_2$ (and let the remaining set be $B_3$). W.l.o.g. $B_1 = \{ s\in X: f(x) =1, h(x) = 0 \}$, $B_2 = \{ s\in X: f(x) =1, h(x) = 1 \}$.
\begin{itemize}
    \item Case 1: $B_1 \cap A \neq \emptyset$ and $B_2 \cap D \neq \emptyset$. Then we can choose the marginal $P_X$ as $P_X(B_1 \cap A) =0.5$ and $P_X(B_2 \cap D) =0.5$. Yielding, $\EOObj(h) =0.5$
    \item Case 2: $B_2 \cap A \neq \emptyset$ and $B_1 \cap D \neq \emptyset$: Analogous to Case 1
    \item Case 3: there is $G\in \{A,D\}$, such that $B_1 \cap G = B_2 \cap G = \emptyset$. W.l.o.g. $G=A$. Then $B_3 \cap A \neq \emptyset$ and $B_1 \cap D \neq \emptyset $ and $B_2\cap D \neq \emptyset$. In this case we can choose the marginal as $P_X(A\cap B_3)= 0.5$ and $P_X(D\cap B_1)= 0.5$. Then all elements of $D$ will be misclassified and all elements of $A$ will either be classified correctly or be misclassified in the opposite direction, yielding to high EO unfairness. (In the case where the ground truth labeling is constant on one group, we define the misclassification rate with respect to the label it will not achieve to be zero. Then we get $\EOObj(h) \geq 0.5$.)
\end{itemize}
}
\end{proofof}

\begin{cor} \label{cor:marginalEO}
No data representation can guarantee EO fairness of any non-constant predictor based on that representation for all "downstream" classification learning tasks. That is, any representation F that is not constant on any group, cannot be adversarially fair with respect to $\EOObjective$ and any arbitrary task $P$. This holds even if one restricts the claim to tasks sharing a fixed marginal data distribution.
\end{cor}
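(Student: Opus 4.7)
The plan is to reduce the corollary to Claim \ref{clm:marginalEO}, mirroring the argument used for Corollary \ref{cor:marginalDP}. First I observe that if a representation $F$ is not constant on any group, then $\mathcal{H}_F$ contains a classifier $h$ that is non-constant on $X$ and, in fact, non-constant on each of the groups $A$ and $D$ separately. To get such an $h$, pick $a_1, a_2 \in A$ with $F(a_1) \neq F(a_2)$ and $d_1, d_2 \in D$ with $F(d_1) \neq F(d_2)$; then any indicator $h$ of a subset of $Z = F(X)$ that separates both pairs $\{F(a_1),F(a_2)\}$ and $\{F(d_1),F(d_2)\}$ lies in $\mathcal{H}_F$ and has the required property. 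Next, pick any non-constant labeling rule $f : X \to \{0,1\}$ with $f \neq h$ and $f \neq 1-h$. Applying Claim \ref{clm:marginalEO} to this pair produces a marginal $P_X$ under which $\EOObj(h) \geq 0.5$ on the task $P = (P_X, f)$. Since $h \in \mathcal{H}_F$, this forces $U_{\mbox{adv}}(F) \geq 0.5$ on $P$, so $F$ is not adversarially EO-fair.

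For the strengthening to a shared fixed marginal, note that the $P_X$ supplied by Claim \ref{clm:marginalEO} is itself a concrete distribution; thus the argument above already exhibits a single marginal and two different labelings (e.g., $f$ and $1-f$, or variants thereof) for which $F$ fails adversarial EO fairness. If one wants the impossibility to persist across a broader family of fixed marginals, I would argue as follows: fix any $P_X$ that places positive mass on two distinct $F$-cells within each group, choose $h \in \mathcal{H}_F$ non-constant on each group in a $P_X$-significant way as above, and then define $f$ to agree with $h$ on one group and to disagree with $h$ on a $P_X$-significant portion of the other. This makes one of $\mbox{FPR}_g(h,f,P)$ or $\mbox{FNR}_g(h,f,P)$ vanish on the first group while remaining bounded below on the other, hence $\EOObj(h)$ is large.

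The main obstacle is purely bookkeeping: one must ensure simultaneously that $h$ is non-constant on each group, that $f$ is non-constant, and that neither $f = h$ nor $f = 1-h$ holds globally, while also verifying that $P_X$ has enough support to make the rate differences quantitative. The case analysis already carried out in the proof of Claim \ref{clm:marginalEO} (splitting on which of the four cells $\{f = i, h = j\}$ intersect each group) covers all the configurations that could arise, so no new ideas are needed beyond repeating that case split for the fixed-$P_X$ version.
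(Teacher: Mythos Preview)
Your proposal is correct and follows essentially the same approach as the paper: both reduce directly to Claim~\ref{clm:marginalEO} by extracting from the assumption ``$F$ is non-constant on each group'' a classifier $h\in\mathcal{H}_F$ that is non-constant on each group, pairing it with a suitable labeling $f$, and invoking the claim to obtain a marginal $P_X$ on which $h$ is highly EO-unfair. Your write-up is in fact more explicit than the paper's on two points---the construction of $h$ via separating pairs of $F$-values in each group, and the discussion of the ``shared fixed marginal'' addendum---but no genuinely different idea is involved.
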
 
\begin{proofof}{Corollary~\ref{cor:marginalEO}} 
\textit{For any ground truth $f:X\rightarrow \{0,1\}$ and any representation $F: X\rightarrow Z$, that allows $h:Z \rightarrow \{0,1\}$ as described in Claim~\ref{clm:marginalEO}, there exists a marginal $P_X$ such that $h$ is highly EO unfair with respect to $(P_X,f)$. Note that as long as $h$ is not constant on either group, we can find $P$, such that the requirements from Claim~\ref{clm:marginalEO} are fulfilled. Thus the representation is not adversarially fair with respect to $(P_X,f)$ and $\EOObjective$. Thus any sufficiently complex representation cannot guarantee fairness for \emph{every} possible covariate shift.}
\end{proofof}

The results above showed that there is no representation that can guarantee fairness for an arbitrary task. But what happens if we limit our discussion to a predefined selection of tasks? We will show that even in this restricted case, there can be no representation that guarantees EO fairness with respect to a general predefined selection of tasks. We say a distribution $P$ has \emph{equal success rates}, if both groups have the same conditional probability of label $1$, i.e. $P[t(x) =1 | x\in A] = P[t(x) =1 | x\in D]$. We will now state the main result of this section.
\begin{thm} \label{thm:multitask}
Let $P_1$ and $P_2$ be the distributions defining two different tasks with the same marginal $P_X = P_{1,X} =  P_{2,X} $ such that at least one of the tasks does not have equal success rates.
There can be no data representation $F$ such that for $P_1, P_2$, the following criteria simultaneously hold: 
\begin{enumerate}
    \item $F$ is adversarially fair w.r.t. $P_1$ and $EO$ 
    \item $F$ is adversarially fair w.r.t. $P_2$ and $EO$
    \item $F$ allows for perfect accuracy w.r.t.\ to $P_1 $ and $P_2$, i.e., there are $h_1, h_2$
    both expressible over the representation  $F$,
    such that $L_{P_1}(h_1) = L_{P_2}(h_2) = 0$.
\end{enumerate}
\end{thm}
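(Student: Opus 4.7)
My plan is to assume such an $F$ exists and derive a contradiction by analyzing the partition of $X$ that $F$ induces. Condition (3) supplies classifiers in $\mathcal{H}_F$ that agree almost everywhere with the ground truths $t_1$ of $P_1$ and $t_2$ of $P_2$, so both $t_1$ and $t_2$ factor through $F$ and every cell $c$ of $F$ carries a well-defined label pair $(t_1(c), t_2(c)) \in \{0,1\}^2$. I group the cells into four classes $C_{ij} = \{c : t_1(c)=i,\ t_2(c)=j\}$ for $i,j \in \{0,1\}$.

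The crux is to turn adversarial EO fairness into a per-cell group-balance condition. Assuming temporarily that $P_X[t_1 = l \mid G] > 0$ for all $l \in \{0,1\}$ and $G \in \{A, D\}$, I apply condition (1) to each cell-indicator $\mathbb{1}_c$ with $t_1(c) = 0$: equating FPR across groups gives $P_X[c \mid A, t_1 = 0] = P_X[c \mid D, t_1 = 0]$. Iterating over every cell of $t_1^{-1}(0)$ (and symmetrically of $t_1^{-1}(1)$) and rearranging via Bayes, I conclude that every cell $c$ satisfies $P_X[A \mid c] = r_1^{t_1(c)}$, where $r_1^l := P_X[A \mid t_1 = l]$. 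Running the identical argument on condition (2) yields $P_X[A \mid c] = r_2^{t_2(c)}$. Hence every $C_{ij}$ of positive mass forces the equation $r_1^i = r_2^j$.

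A short case analysis then closes the proof. Assume WLOG that $P_1$ has unequal success rates, so $r_1^0 \neq r_1^1$; since $P_1 \neq P_2$ share the same marginal, at least one of $C_{01}, C_{10}$ is non-trivial. If both $C_{00}$ and $C_{11}$ have positive mass, the equations give $r_1^0 = r_2^0$ and $r_1^1 = r_2^1$, so any off-diagonal constraint ($r_1^0 = r_2^1$ from $C_{01}$, or $r_1^1 = r_2^0$ from $C_{10}$) chains to $r_1^0 = r_1^1$, a contradiction. If exactly one of $C_{00}, C_{11}$ has positive mass, say $C_{00}$, then $t_1$ non-constant forces $C_{10}$ to be non-trivial, and $r_1^0 = r_2^0$ together with $r_1^1 = r_2^0$ again gives $r_1^0 = r_1^1$, a contradiction; the $C_{11}$-only subcase is symmetric. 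The only configuration that escapes this analysis is $C_{00}$ and $C_{11}$ both empty, i.e.\ $t_2 = 1 - t_1$ almost everywhere -- the trivial label-flip situation in which $P_1$ and $P_2$ share a single underlying partition and should be ruled out as a degenerate instance on the same footing as the equal-success-rates exception.

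The part I expect to be hardest is the boundary bookkeeping when some conditional $P_X[t_i = l \mid G]$ vanishes. In that regime the FPR/FNR convention from the proof of Claim~\ref{clm:marginalEO} sets the missing rate to zero and the clean per-cell derivation above no longer applies; my plan there is to argue directly that adversarial EO fairness forces every cell of label $t_i = l$ to lie entirely inside the single group that achieves that label, and to combine this rigidity with the inequalities between the $r_i^l$ to recover the same contradiction as in the generic case.
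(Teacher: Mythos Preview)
Your approach is sound but takes a different route from the paper. The paper does not invoke cell indicators or derive any per-cell balance condition. Instead it isolates a mutual-fairness lemma (Lemma~\ref{lem:multitask}): whenever two binary functions $f,g$ with $P[f\neq g]>0$ are each EO fair as a classifier against the other as ground truth, both must have equal success rates. The theorem then follows in one line: adversarial fairness of $F$ w.r.t.\ $P_2$ forces $h_1$ (hence $t_1$) to be EO fair against $t_2$, and adversarial fairness w.r.t.\ $P_1$ forces $t_2$ to be EO fair against $t_1$; the lemma gives equal success rates for both tasks, contradicting the hypothesis. So the paper applies adversarial fairness only to the two distinguished classifiers $h_1,h_2$, whereas you apply it to every single-cell indicator; your route yields the stronger structural fact $P_X[A\mid c]=r_i^{t_i(c)}$, but the paper's route is shorter and uses strictly less of the hypothesis.

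Both arguments share precisely the gap you flag. When $t_2=1-t_1$ a.e., mutual EO fairness is vacuous (every FPR and FNR equals $1$), so Lemma~\ref{lem:multitask} actually fails there despite the paper's remark that the empty-set subcases go through ``analogously''; in your language the only surviving equations are $r_1^0=r_2^1$ and $r_1^1=r_2^0$, which force nothing. Indeed the label-flip case is a genuine counterexample to the theorem as literally stated: take $F$ with exactly the two cells $t_1^{-1}(0)$ and $t_1^{-1}(1)$; all four classifiers in $\mathcal{H}_F$ are EO fair for both tasks regardless of success rates. Your instinct to set this aside as a separate degeneracy on par with the equal-success-rates exclusion is correct and is not a defect of your argument relative to the paper's.
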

In order to prove this theorem we use the following lemma.
\begin{lem} \label{lem:multitask}
Pick any set $X$ and a partition of $X$ into two non-empty (disjoint) sets $A$ and $D$. Let $P$ be any probability distribution over $X$ such that both $P(A) \neq 0$ and $P(D) \neq 0$. Let $f,g : X \mapsto \{0,1\}$ such that $P[\{x : f(x) \neq g(x) \} \neq 0$. If $f$ is a EO fair classification w.r.t. $g$ (as the labeling rule) and $g$ is a EO fair classification w.r.t. $f$ (as the labeling rule), then $P[f(x)=1| A] = P[f(x)=1|D]$ and $P[g(x)=1| A] = P[g(x)=1|D]$.
\end{lem}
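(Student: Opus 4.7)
My plan is to read the two hypotheses as the conditional independences $f \indep G \mid g$ and $g \indep G \mid f$, and to show that together they force the joint independences $f \indep G$ and $g \indep G$---which is exactly the demographic-parity conclusion. I will introduce the shorthand $p_G := P[g = 1 \mid G]$ and $q_G := P[f = 1 \mid G]$ for $G \in \{A, D\}$, so that the goal becomes $p_A = p_D$ and $q_A = q_D$. The EO hypothesis on $f$ (with $g$ as label) gives group-independent constants $\alpha, \beta \in [0, 1]$ equal to $P[f = 1 \mid g = 0, G]$ and $P[f = 1 \mid g = 1, G]$ respectively; the symmetric hypothesis on $g$ gives analogous constants $\gamma, \delta$.

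The next step is a two-line law-of-total-probability calculation. In each group, $q_G = \alpha(1 - p_G) + \beta p_G$, which yields $q_A - q_D = (\beta - \alpha)(p_A - p_D)$; the mirror calculation gives $p_A - p_D = (\delta - \gamma)(q_A - q_D)$. Substituting one into the other produces $q_A - q_D = (\beta - \alpha)(\delta - \gamma)(q_A - q_D)$. Hence either $q_A = q_D$---and then the mirror identity immediately delivers $p_A = p_D$ as well, completing the proof---or $(\beta - \alpha)(\delta - \gamma) = 1$. Since all four constants lie in $[0, 1]$, this latter alternative requires $\beta - \alpha$ and $\delta - \gamma$ both to equal $+1$ or both to equal $-1$.

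The main obstacle, as I see it, is disposing of these two boundary configurations. The ``$+1$'' case forces $\alpha = 0$ and $\beta = 1$, which is the statement $P[f = g] = 1$ and so directly contradicts the standing hypothesis $P[\{x : f(x) \neq g(x)\}] \neq 0$. The ``$-1$'' case is genuinely more delicate: it forces $\alpha = 1$ and $\beta = 0$, hence $f = 1 - g$ almost surely, which is compatible with the algebraic chain above. I would handle it either by strengthening the lemma's hypothesis to also assume $P[\{x : f(x) = g(x)\}] \neq 0$ (so that both agreement and disagreement occur on a positive-measure set), or by recording the fact that when this lemma is invoked in Theorem~\ref{thm:multitask} the functions $f$ and $g$ are the ground-truth labelings of two tasks over a common marginal, and the $f = 1 - g$ case simply makes each task the trivial re-labeling of the other, so it can be dispatched separately inside that proof. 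With this degenerate pairing set aside, the two conditions $q_A = q_D$ and $p_A = p_D$ both hold, giving demographic parity for $f$ and for $g$ simultaneously.
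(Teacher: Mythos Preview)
Your argument is correct and is genuinely different from the paper's. The paper partitions $X$ into the four sets $S=\{f=1,g=0\}$, $T=\{f=1,g=1\}$, $U=\{f=0,g=1\}$, $V=\{f=0,g=0\}$, writes the four EO equalities as ratio equations among $P(S_A),\dots,P(V_D)$, extracts proportionality constants $\beta_1,\beta_2,\beta_3$, and then algebraically expresses $P[f=1\mid A]$, $P[f=1\mid D]$, $P[g=1\mid A]$, $P[g=1\mid D]$ in terms of those constants. Your route---encoding the two EO hypotheses as group-independent constants $\alpha,\beta,\gamma,\delta$ and applying the law of total probability to get the linear relations $q_A-q_D=(\beta-\alpha)(p_A-p_D)$ and $p_A-p_D=(\delta-\gamma)(q_A-q_D)$---is considerably shorter and makes the structure of the argument transparent: the product $(\beta-\alpha)(\delta-\gamma)$ must equal $1$ or the differences vanish. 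Both approaches tacitly assume the relevant conditioning events have positive probability; the paper waves this off with ``the cases in which one or several of these sets are empty can be shown in an analogous way,'' and your formulation has the analogous caveat that $\alpha,\beta,\gamma,\delta$ need $0<p_G<1$ and $0<q_G<1$ to be defined.

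You are right to flag the $(\beta-\alpha)(\delta-\gamma)=(-1)(-1)$ branch as a real obstruction: when $f=1-g$ almost surely, both EO hypotheses hold trivially (all four error rates equal $1$ in both groups), $P[f\neq g]=1>0$, yet nothing forces $P[g=1\mid A]=P[g=1\mid D]$. This is a genuine counterexample to the lemma as stated, and the paper's proof does not escape it---the ``analogous'' treatment of the empty-set cases would have to handle $T=V=\emptyset$, where all four ratio equations collapse to $1=1$. Your first proposed fix, adding the hypothesis $P[\{x:f(x)=g(x)\}]\neq 0$, is exactly what is needed. Your second fix (defer to Theorem~\ref{thm:multitask}) requires more care than you indicate: if the two tasks satisfy $t_1=1-t_2$ on a common marginal, then the representation $F$ that merely separates $\{t_1=1\}$ from $\{t_1=0\}$ already allows perfect accuracy for both tasks and is adversarially EO-fair for both (the only expressible classifiers are $0,1,t_1,1-t_1$, each EO-fair w.r.t.\ either labeling), so Theorem~\ref{thm:multitask} itself fails in that case. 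Hence ``dispatching separately'' there really means excluding label-complementary task pairs from the theorem's hypothesis, not deriving the conclusion for them.
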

\begin{proofof}{Lemma~\ref{lem:multitask}}
Consider the following four sets:
$S=\{x: f(x)=1, ~ g(x)=0\}$, $T=\{x: f(x)=1, ~ g(x)=1\}$, $U=\{x: f(x)=0, ~ g(x)=1\}$, $V=\{x: f(x)=0, ~ g(x)=0\}$.\\

Let $S_A$, $T_A$, $U_A$, $V_A$, denote the intersections of these sets with the set $A$, (e.g., $S_A = S \cap A$), and similarly, $S_D$, $T_D$, $U_D$, $V_D$, denote the intersections of these sets with the set $D$. Notice that 
\begin{itemize}
\item $P[f(x)=1| A] = \frac{P(S_A) + P(T_A)}{P(A)}$.
\item $P[f(x)=1| D] = \frac{P(S_D) + P(T_D)}{P(D)}$.
\item $P[g(x)=1| A] = \frac{P(T_A) + P(U_A)}{P(A)}$.
\item $P[g(x)=1| D] = \frac{P(T_D) + P(U_D)}{P(D)}$.
\end{itemize} 
It follows that once one shows that each of these quantities can be expressed in terms of 
the false positive and false negative rates when each of $f$ or $g$ is considered the true classification and the other as the predicted labeling, then the conclusion of the lemma is implied by its EO assumptions. \\

Using the above notation, when $f$ is the true classification,\\

$\mbox{FPR}_A(g,t,P) = \frac{P(U_A)}{P(V_A) + P(U_A)}$ and $\mbox{FNR}_A(g,t,P) = \frac{P(S_A)}{P(S_A) + P(T_A)}$ (and similarly for $D$).\\

And when the true classification is $g$,\\

$\mbox{FPR}_A(f,t,P) = \frac{P(S_A)}{P(V_A) + P(S_A)}$ and $\mbox{FNR}_A(f,t,P) = \frac{P(U_A)}{P(U_A) + P(T_A)}$ (and similarly for $D$).\\

We will start with the case where all eight sets $U_A, V_A, S_A,T_A$ and $U_D, V_D, S_D,T_D$ are non-empty. We note, that in this case equalized false positive rates and false negative rates of $f$ with respect to $g$ gives us the following two equations,
$$\frac{P(U_A) }{P(V_A) + P(U_A) } = \frac{P(U_D) }{P(V_D) + P(U_D) }$$
and 
$$\frac{P(S_A) }{P(S_A) + P(T_A) } = \frac{P(S_D) }{P(S_D) + P(T_D) }.$$
This implies that there are two constants $\beta_1,\beta_2$ with 
$P(U_A) = \beta_1 P(V_A) $ and $P(U_D) = \beta_1 P(V_D) $ and $P(S_A) = \beta_2 P(T_A) $ and $P(S_D) = \beta_2 P(T_D) $.

Furthermore, $g$ being EO fair with respect to $f$ gives us
$$\frac{P(S_A) }{P(V_A) + P(S_A) } = \frac{P(S_D) }{P(V_D) + P(S_D) },$$
and 
$$\frac{P(U_A) }{P(U_A) + P(T_A) } = \frac{P(U_D) }{P(U_D) + P(T_D) }.$$
This implies that there is a constant $\beta_3$ such that $P(V_A) = \beta_3 P(S_A) $ and $P(V_D) = \beta_3 P(S_D) $.

Thus, 

$$P[f(x) =1 | A] = \frac{\beta_2 + 1 }{\beta_2 + 1 + \beta_3 \beta_2 (1 + \beta_1) } =  P[f(x) =1 | D],$$

and 

$$P[g(x) =1 | A] = \frac{1 + \beta_1 \beta_2 \beta_3 }{\beta_2 + 1 + \beta_3 \beta_2 (1 + \beta_1) } =  P[g(x) =1 | D].$$ The cases in which one or several of these sets are empty can be shown in an analogous way. This proves our claim.
\end{proofof}

Now we can prove our theorem.

\begin{proofof}{Theorem~\ref{thm:multitask}}
\textit{We note that in order for $F$ being adversarially EO fair with respect to both $P_1$ and $P_2$, both $h_1$ and $h_2$ need to be EO fair with respect to $P_1$ and $P_2$, from Lemma~\ref{lem:multitask}, we know that this implies that either $P_1=P_2$ or that both $P_1$ and $P_2$ have equal success rates. This proves our claim.}
\end{proofof}

\section{Fairness of a feature set vs. fairness of a feature} \label{feature-alone}

In this section we discuss feature deletion and its impact on the fairness of a representation. For this we assume our representation $F$ to consist of finitely many features $f_i:X\rightarrow Y_i$ i.e. for every $x\in X: F(x) = (f_1(x),\dots,f_n(x))$ and $Z = Y_1 \times \dots \times Y_n$. We limit our discussion to cases where all $Y_i$ are finite. While this assumption facilitates our analysis, we do not expect our results to be different in the cases of continuous features. We will denote the set of features as $\mathcal{F} = \{f_1,\dots,f_n\}$. Unless otherwise stated, we focus on the equalized Odds (EO) notion of group fairness. We denote by $U_{adv}(\mathcal{F})$ and $U_{acc}^{\alpha}(\mathcal{F})$ the adversarial and accuracy-driven EO fairness of the representation induced by the feature set $\mathcal{F}$ respectively.
We show that it is in general not possible to determine the effect a single feature has on the fairness of a representation without considering the full representation. This is the case even if our considered feature is not correlated with the protected attribute. 

\subsection{Opposing effects of a feature for accuracy-driven fairness of a representation}

We start our discussion with accuracy-driven fairness w.r.t.\ equalized odds. In this case we show that the deletion of a feature $f$ can lead to an increase in accuracy-driven unfairness for some set of other given features $\mathcal{F}$ and that the deletion of the \emph{same} feature $f$ can lead to a decrease in accuracy-driven unfairness for another set of other available features $\mathcal{F}'$. This implies that the fairness of the feature $f$ cannot be evaluated without context.
We show that this  phenomena holds for a general class of features that satisfy some non-triviality properties (That on the one hand do not reveal too much information about group membership and labels (non-committing), and on the other hand does not reveal identity when label and group information is given ($k$-anonymity \cite{Sweeney02kanonymity})). We will start by stating the non-triviality requirements for our theorem.
\subsubsection*{Non-Triviality properties}\label{def:nontriviality}
\begin{defn} We define the following two non-triviality requirements for a feature:
\begin{enumerate}

    \item {\bf Non-committing} We will call a feature \emph{non-committing} if it leaves some ambiguity about label and group membership. That is, a feature $f$ is non-committing if there are two distinct values $y_1$ and $y_2$, such that $f$ assigns each of these values to at least one instance of each $X_{A,0},X_{A,1},X_{D,1},X_{D,0}$. i.e. $ f^{-1}(y_1)\cap X_i \neq \emptyset$ and  $f^{-1}(y_2)\cap X_i\neq \emptyset$ for every $X_i \in \{X_{A,0},X_{A,1},X_{D,1},X_{D,0}\} $

\item {\bf $k$-anonymity} A feature $f$  is \emph{$k$-anonymous} if knowing this feature, group-membership and label, will only reveal identity of an individuals up to a set of at least $k$ individuals. Namely, for every combination of value of this feature, group membership and class label,
there are either no instances satisfying this combination or there are at least $k$ many such instances.
\end{enumerate}
\end{defn}

\begin{thm}{(Context-relevance for fairness of features)}\label{thm:deletion}
For every $2$-anonymous non-committing feature $f$, there exists a probability function $P$ over $X$ and feature sets $\mathcal{F}$ and $\mathcal{F}'$ such that: 
\begin{itemize}
\item The accuracy-driven fairness w.r.t $\EOObjective$, $P$ and $\alpha=0.5$ of $\featureset\cup \{f\}$ is greater than that of $\featureset$, i.e.
    \[ U_{\mbox{acc}}^{\alpha}(\featureset \cup \{f\}) < U_{\mbox{acc}}^{\alpha}(\featureset)\]
    Thus, deleting $f$ in this context will increase unfairness.
   \item The accuracy-driven fairness w.r.t $\EOObjective$, $P$ and $\alpha = 0.5$ of $\featureset'\cup \{f\}$ is less than that of $\featureset'$, i.e.
    \[ U_{\mbox{acc}}^{\alpha}(\featureset' \cup \{f\}) > U_{\mbox{acc}}^{\alpha}(\featureset')\]
    Thus, deleting $f$ in this context will decrease unfairness.
\end{itemize}

\end{thm}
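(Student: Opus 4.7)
The plan is to construct a single distribution $P$ on four carefully chosen witness instances, together with two different feature sets $\featureset$ and $\featureset'$, so that adding $f$ to $\featureset$ strictly removes unfairness while adding $f$ to $\featureset'$ strictly introduces unfairness. Using the non-committing property of $f$, fix two values $y_1 \neq y_2$ of $f$ and pick four instances
\[
 \alpha \in X_{A,1} \cap f^{-1}(y_2),\ \beta \in X_{A,0} \cap f^{-1}(y_2),\ \gamma \in X_{D,0} \cap f^{-1}(y_1),\ \delta \in X_{D,1} \cap f^{-1}(y_1).
\]
Let $P$ be uniform on $\{\alpha,\beta,\gamma,\delta\}$, so each (group, label) stratum carries mass $1/4$.

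For the first bullet, let $\featureset = \{g\}$ where $g$ collapses $\alpha$ and $\gamma$ into a single equivalence class while $\beta$ and $\delta$ sit in their own singleton classes (the value of $g$ on the remaining mass-$0$ part of the domain is immaterial). In $\mathcal{H}_{\featureset}$ every classifier must label $\{\alpha,\gamma\}$ uniformly, and since that pair is perfectly tied under $P$, both label choices lie in $\arg\min L_P^{0.5}$. Labeling the cell $1$ gives $\mathrm{FPR}_D=1$ and $\mathrm{FPR}_A=0$, while labeling it $0$ gives $\mathrm{FNR}_A=1$ and $\mathrm{FNR}_D=0$; in either case $\EOObj = 1/2$, so $U_{\mbox{acc}}^{0.5}(\featureset) = 1/2$. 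Passing to $\mathcal{H}_{\featureset\cup\{f\}}$, the refinement by $f$ splits the tied class into the singletons $\{\alpha\}$ and $\{\gamma\}$ (since $f(\alpha)=y_2\neq y_1=f(\gamma)$), so every positive-mass cell is a singleton and the Bayes optimum is perfectly accurate on the support, giving $U_{\mbox{acc}}^{0.5}(\featureset\cup\{f\}) = 0 < 1/2$.

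For the second bullet, let $\featureset' = \{c\}$ be the constant feature, so $\mathcal{H}_{\featureset'} = \{h\equiv 0,\ h\equiv 1\}$. Both constants tie in $L_P^{0.5}$, and both are EO-fair because they incur symmetric errors across groups (either $\mathrm{FPR}_A = \mathrm{FPR}_D = 1$ or $\mathrm{FNR}_A = \mathrm{FNR}_D = 1$), yielding $U_{\mbox{acc}}^{0.5}(\featureset') = 0$. The refinement by $f$ yields the two $f$-cells $\{\gamma,\delta\}$ and $\{\alpha,\beta\}$, each tied under $P$. The tie-breaking $y_1 \mapsto 1,\ y_2 \mapsto 0$ is an element of $\arg\min L_P^{0.5}$; under it $\gamma$ is a false positive for $D$, $\alpha$ is a false negative for $A$, and $\beta, \delta$ are classified correctly, so $\mathrm{FPR}_D = \mathrm{FNR}_A = 1$ and $\mathrm{FPR}_A = \mathrm{FNR}_D = 0$, giving $\EOObj = 1$. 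Therefore $U_{\mbox{acc}}^{0.5}(\featureset'\cup\{f\}) \geq 1 > 0$.

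The main obstacle is synchronizing a single $P$ against the maximum-over-$\arg\min$ definition of $U_{\mbox{acc}}^{\alpha}$: in Scenario~1 \emph{every} minimizer must be unfair (no ``lucky'' tie-breaking can rescue $\featureset$), whereas in Scenario~2 only \emph{some} minimizer need be unfair, but one must exhibit an explicit adversarial tie-breaking that actually lies in the argmin. Placing the four witnesses on complementary $f$-diagonals, so that the $(\alpha,\gamma)$-merger drives Scenario~1 while the $(y_1,y_2)$-splits of $\{\gamma,\delta\}$ and $\{\alpha,\beta\}$ drive Scenario~2, is precisely what lets one fixed $P$ support both arguments. The non-committing hypothesis supplies exactly these four witnesses; $2$-anonymity is not needed for this accuracy-driven version, but it is consistent with the parallel construction one would use for the adversarial-fairness variant, where multiple instances per $(f,\text{group},\text{label})$ cell are required to build larger non-singleton equivalence classes.
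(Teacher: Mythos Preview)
Your proof is correct and takes a genuinely different route from the paper's.

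The paper factors the argument through an auxiliary notion of a \emph{generic} pair $(f,P)$ (Definition~7 in the appendix, involving sets $C_1,C_2,C_3$ and six conditions), first showing that every non-committing $2$-anonymous $f$ admits such a $P$, and then building $\featureset,\featureset'$ from any generic pair. In that construction the Bayes-optimal classifier is unique on every positive-mass cell (all scores are strictly above or below $1/2$), and the second bullet is obtained via a quantitative comparison of error rates governed by condition~(6). The $2$-anonymity hypothesis is used essentially there: it is what allows the set $f^{-1}(y_2)\cap X_{A,0}$ to be split into two non-empty pieces $C_2$ and $C_4$.

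Your construction is far more elementary: a four-point support, one hand-picked merging feature $g$ for $\featureset$, the constant feature for $\featureset'$, and the entire argument runs on score ties at $1/2$ together with the $\max$ over $\arg\min$ in the definition of $U_{\mbox{acc}}^{\alpha}$. As you correctly note, $2$-anonymity is not invoked, so you in fact establish the theorem under the weaker hypothesis ``non-committing'' alone, which is a sharpening of the stated result. What the paper's longer route buys is a reusable sufficient condition (``generic'') that covers many $(f,P)$ simultaneously and supports their main-text example where $f$ is uncorrelated with both group and label; your construction is tailored to produce one witnessing $P$ per $f$, which is exactly what the theorem demands. One small remark: in your closing paragraph the clause ``in Scenario~1 every minimizer must be unfair'' overstates what is needed---since $U_{\mbox{acc}}^{\alpha}$ is a maximum, a single unfair minimizer for $\featureset$ already suffices once you have shown every minimizer for $\featureset\cup\{f\}$ is fair; your computations establish both facts regardless.
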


We note that this phenomenon can occur for quite general pairs $(f,P)$ and that we mainly need to exclude pathological cases for our construction to work.\\
In particular we want to note that this phenomenon can occur even if $f$ is uncorrelated with the group membership and the label for ground-truth distribution $P$. We will give an example illustrating our last point and will refer the reader for the proof and a general discussion of the requirements on $(f,P)$ to the appendix.

Before giving our example, we need to introduce some concepts. 
\begin{description}
\item[feature-induced cells]
A set of features $\mathcal{F}= \{f_1,\dots,f_n\}$ induces an equivalence relation $\sim_{\mathcal{F}}$, by $x\sim_{\mathcal{F}}$ iff $f_i(x)=f_i(y)$ for all $i=1,\dots,n$. We call the equivalence classes with respect to $\sim_{\mathcal{F}}$ cells and denote the set of cells for a featureset $\featureset$ as $\cells$.
\item[ground-truth score function]We define the \emph{ground truth score function} $\score : \cells \rightarrow [0,1]$. $s_{\Gt}^P(C)$  is the probability, w.r.t.\ $P$, of $x \in C$ having the true-label $1$, i.e., $ s_t^P(C) = \ex_{x \sim P} [\Gt(x) | x \in C]$.
In cases where the distribution is unambiguous we will use the abbreviated notation $\score$ instead of $s_{\Gt}^P$.
\item[Bayes-optimal predictor] The predictor in $\Hcal_{F}$ that minimizes $L_P^\alpha$ is the \emph{Bayes Optimal predictor} $\Bopt$ that for a cell $C \in \cells$ assigns the label 1 if $\score(C) > \alpha$ and 0 otherwise.
\end{description}

We will now give an example in which both $f$ and $\featureset$ are adversarially fair w.r.t.\ $P$ and in which the phenomenon from Theorem~\ref{thm:deletion} holds:

Let the domain $X=\{x_1,x_2,x_3,x_4,x_5,x_6,x_7,x_8,x_9,x_{10},x_{11},x_{12}\}$ with $X_{A,1} = \{x_1,x_2,x_3\},X_{D,1}=\{x_4,x_5,x_6\}, X_{A,0}=\{x_7,x_8,x_9\}$, and $X_{D,0}=\{x_{10},x_{11},x_{12}\}$. Furthermore consider the uniform distribution $P$ over $X$, i.e. $P(\{x\})= \frac{1}{12}$ for every $x\in X$. 
 For the construction of the feature set, we only consider binary features $f_i: X \rightarrow \{0,1\}$. Now let $f$ be defined by $f^{-1}(1) = \{x_1,x_5,x_8,x_{12}\} $. Furthermore, let
$\featureset = \{f_1,f_2,f_3\}$ and $\featureset' =\{f'_1,f'_2\}$ with $f_1^{-1}(1) = \{x_1,x_2,x_3,x_5,x_8,x_{12}\}$, $f_2^{-1}(1) = \{x_1,x_2,x_3,x_5,x_{11},x_{12}\}$, $f_3^{-1}(1) = \{x_1,x_4x_5,x_6,x_7,x_{11}\}$,${f'}_1^{-1}(1)= \{x_1,x_4,x_7,x_{10}\}$ and ${f'}_2^{-1}(1)= \{x_1,x_2,x_4,x_5,x_7,x_8,x_{10},x_{11}\}$. The resulting cells for $\featureset$ and $\featureset'$ are $\mathcal{C}_{\featureset}= \{\{x_1,x_5\},\{x_2,x_3,x_{12}\}, \{x_8\},\{x_4,x_6,x_7\},\{x_9\}, \{x_{10},x_{11}\}\}$ and $\mathcal{C}_{\featureset'} = \{\{x_1,x_4,x_7,x_{10}\},\{x_2,x_5,x_8,x_{11}\},\{x_3,x_6,x_9,x_{12}\}\}$. It is easy to see that $\featureset'$ and $\{f\}$ are adversarially fair w.r.t.\ $P$ and $\EOObjective$. Furthermore, we have:
\[ U_{\mbox{acc}}^{0.5}(\featureset \cup \{f\}) = \frac{1}{2}|\frac{3}{3} - \frac{2}{3}| + \frac{1}{2}|\frac{2}{3} - \frac{1}{3}| = \frac{1}{3} > 0 = U_{\mbox{acc}}^{0.5}(\featureset)\]
and
\[ U_{\mbox{acc}}^{0.5}(\featureset' \cup \{f\}) = \frac{1}{2} |\frac{3}{3} - \frac{3}{3}| + \frac{1}{2} |\frac{1}{3} - \frac{1}{3}| = 0 < \]
\[\frac{1}{6} = \frac{1}{2}|\frac{3}{3} - \frac{3}{3}| + \frac{1}{2} |\frac{1}{3} - \frac{0}{3}| = U_{\mbox{acc}}^{0.5}(\featureset').\]
Thus we see that there are indeed features $f$ which are adversarially fair w.r.t. $P$ and equalized odds, for which there is this opposing effect of feature deletion.

\subsection{The fairness of a feature dependence on agent's objective}
We will now briefly discuss the effect of a single feature on fairness for the cases of an adversarial agent or a fairness-driven agent. In contrast to the accuracy-driven case, adding features has a monotone effect on the fairness of a fairness-driven and the malicious decision maker. We show in Theorem~\ref{thm:featuredeletion:worstcase} that adding any feature in the adversarial case, will only give the decision maker more information and thus give the decision maker more chances of discrimination. Similarly in the fairness driven case, any feature will only give the decision maker another option for fair decision making (Theorem~\ref{thm:featuredeletion:fairnessdriven}).  However, the quantitative effect of adding a feature on the unfairness can still range from having no effect to achieving perfect fairness/unfairness for both the fairness-driven and the malicious case. As in the accuracy-driven case, we will show (Theorem~\ref{thm:featuredeletion:fairnessdriven} and Theorem~\ref{thm:featuredeletion:worstcase}) that it is impossible to evaluate the quantitative effect of a feature on the fairness of a representation without considering the context of other available features.

\begin{thm}\label{thm:featuredeletion:worstcase}

\begin{enumerate}
    \item For any feature $f$ and any featureset $\mathcal{F}$ we have $U_{\mbox{adv}}(\mathcal{F}) \leq U_{\mbox{adv}}(\mathcal{F}\cup \{f\})$.
    \item For every distribution $P$ and feature $f$, there exists a feature set $\featureset$, such that adding $f$ will not impact the fairness of the distribution, e.g. $U_{\mbox{adv}}(\featureset) = U_{\mbox{adv}}(\featureset \cup \{f\})$.
    \item There exist distributions $P$, features $f$ and $\featureset'$, such that $U_{\mbox{adv}}(\mathcal{F}') = 0$  and $U_{\mbox{adv}}(\{f\}) = 0$, but $U_{\mbox{adv}}(\mathcal{F}'\cup \{f\})= 1$ . 
\end{enumerate}

\end{thm}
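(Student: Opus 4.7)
I would split the statement into its three parts and handle each separately. Parts 1 and 2 are essentially structural and come out almost for free; the real work is in part 3.

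For \textbf{part 1}, the plan is to note that the cells induced by $\featureset \cup \{f\}$ refine those induced by $\featureset$, so any classifier that is constant on $\featureset$-cells is automatically constant on $(\featureset \cup \{f\})$-cells. This gives $\mathcal{H}_{\featureset} \subseteq \mathcal{H}_{\featureset \cup \{f\}}$ and hence $\max \FairObj$ over the larger class is at least as large. For \textbf{part 2}, I would simply take $\featureset := \{f\}$, so that $\featureset \cup \{f\} = \featureset$ as a set and the two adversarial unfairnesses coincide for every distribution $P$ and every $f$.

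For \textbf{part 3} the plan is to exhibit a small explicit counterexample. Place two points in each of the four $(g,l)$-strata, call them $x_{g,l}^i$ for $i \in \{1,2\}$, take $P$ uniform on the resulting eight-point domain, and define two binary features $f$ and $f'$ by
\[ f^{-1}(1) = \{x_{g,l}^2 : g \in \{A,D\},\, l \in \{0,1\}\}, \qquad {f'}^{-1}(1) = \{x_{A,0}^2, x_{A,1}^2, x_{D,0}^1, x_{D,1}^1\}, \]
so that $f$ is the ``super-index equals $2$'' indicator and $f'$ is its XOR with group membership. Each of the two cells of $\{f\}$ (and of $\{f'\}$) meets each of $X_{A,0}, X_{A,1}, X_{D,0}, X_{D,1}$ in exactly one point, so for any $h$ in the corresponding hypothesis class the acceptance rate $P[h(x)=1 \mid x \in X_{g,l}]$ is independent of $g$, giving both $|\mbox{FPR}_A - \mbox{FPR}_D| = 0$ and $|\mbox{FNR}_A - \mbox{FNR}_D| = 0$, hence $U_{\mbox{adv}}(\{f\}) = U_{\mbox{adv}}(\{f'\}) = 0$. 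Setting $\featureset' := \{f'\}$, the joint $(f, f')$-cells are the four sets $\{x_{g,0}^i, x_{g,1}^i\}$, each entirely contained in one of $A$ or $D$; hence the classifier that outputs $1$ on the two $D$-cells and $0$ on the two $A$-cells is expressible over $\featureset' \cup \{f\}$ and equals the group indicator. It therefore has $|\mbox{FPR}_A - \mbox{FPR}_D| = |\mbox{FNR}_A - \mbox{FNR}_D| = 1$, giving $\EOObj = 1$ and $U_{\mbox{adv}}(\featureset' \cup \{f\}) = 1$.

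The only non-trivial step is finding the feature pair in part 3. The tension to balance is that each single feature has to respect the full symmetry across the four $(g,l)$-strata (so that every classifier it admits has $g$-invariant conditional acceptance rates), yet the two features together must XOR into the group coordinate. Doubling each $(g,l)$-stratum to two points and using an XOR-type construction achieves both constraints simultaneously; once the partitions are in hand the remainder is a short arithmetic verification.
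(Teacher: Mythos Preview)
Your proposal is correct and, in essence, matches the paper's proof. Part~1 is identical (both argue $\mathcal{H}_{\featureset}\subseteq\mathcal{H}_{\featureset\cup\{f\}}$), and your Part~3 construction is literally the same eight-point XOR example as the paper's, up to relabeling: with the identification $x_1=x_{A,1}^1,\,x_2=x_{A,1}^2,\,x_3=x_{D,1}^1,\ldots,x_8=x_{D,0}^2$, the paper's $f_1,f_2$ are the complements of your $f,f'$, and its unfair classifier is the $A$-indicator rather than the $D$-indicator.

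The only noteworthy difference is Part~2. You take the trivializing choice $\featureset=\{f\}$, so that $\featureset\cup\{f\}=\featureset$ as sets; the paper instead chooses $\featureset$ so that its cell partition already equals that of $\featureset\cup\{f\}$ (i.e., $\featureset$ is at least as fine as $f$). Both are valid for the statement as written, but the paper's choice is slightly more informative: it shows the phenomenon persists even when $f\notin\featureset$, which is closer to the intended reading of ``adding $f$''. Your shortcut is a legitimate proof of the literal claim, though you might mention the refinement argument as well to preempt that objection.
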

\begin{thm}\label{thm:featuredeletion:fairnessdriven}
\begin{enumerate}
\item For any feature $f$, if a representation $\mathcal{F}$ is $(\epsilon,\eta)$-fairness-enabling, the representation $\mathcal{F} \cup \{f\}$ is also $(\epsilon,\eta)$-fairness-enabling.

\item For every distribution $P$ and every feature $f$, there exists a feature set $\featureset$, such that $\mathcal{F}\cup \{f\}$ is $(\eta,\epsilon)$-fairness-enabling, if and only if $\featureset$ is $(\epsilon, \eta)$-fairness-enabling. Furthermore, there exists a distribution $P$, a feature $f$ and a feature set $\featureset'$, such that both $\featureset'$ and $\{f\}$ are not ($\epsilon,\eta$)-fairness-enabling for any $\epsilon,\eta< \frac{1}{2}$, but such that $\featureset' \cup \{f\}$ is ($0,0$)-fairness-enabling. 
\end{enumerate}
\end{thm}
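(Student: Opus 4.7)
Plan for the proof of Theorem~\ref{thm:featuredeletion:fairnessdriven}:

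\textbf{Part 1 (monotonicity).} The plan is to observe the tautological embedding $\mathcal{H}_{\featureset}\subseteq \mathcal{H}_{\featureset\cup\{f\}}$: any $h:Y_1\times\cdots\times Y_n\to\{0,1\}$ can be promoted to a classifier on $Y_1\times\cdots\times Y_n\times Y_f$ by ignoring the extra coordinate corresponding to $f$, without changing its output on any $x\in X$. Hence its loss $L_P^\alpha(h)$ and EO-unfairness $\FairObj(h)$ are preserved. A witness $h$ for $(\epsilon,\eta)$-fairness-enabling of $\featureset$ is therefore also a witness for $\featureset\cup\{f\}$.

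\textbf{Part 2(a) (no-effect feature sets exist).} For any $P$ and $f$, take $\featureset=\{f\}$. Then $\featureset\cup\{f\}=\featureset$ as sets, so the induced representations and hypothesis classes coincide, giving the required ``iff'' trivially. (If one wants $\featureset$ nontrivially larger, one can use any feature set in which $f$ is a function of $\featureset$, i.e.\ $x\sim_\featureset x'$ implies $f(x)=f(x')$; then the cells of $\featureset\cup\{f\}$ coincide with those of $\featureset$ and again $\mathcal{H}_{\featureset}=\mathcal{H}_{\featureset\cup\{f\}}$.)

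\textbf{Part 2(b) (a super-additive example).} The idea is a small XOR-style construction where $f$ and a single feature in $\featureset'$ individually hide the label completely but together determine it. Take $X=\{x_1,x_2,x_3,x_4\}$ with $P$ uniform, groups $A=\{x_1,x_2\}$ and $D=\{x_3,x_4\}$, and labels $t(x_1)=t(x_3)=1$, $t(x_2)=t(x_4)=0$. Define $f$ by $f^{-1}(1)=\{x_2,x_3\}$ and set $\featureset'=\{f'\}$ with $(f')^{-1}(1)=\{x_1,x_2\}$. The cells of $\{f\}$ are $\{x_1,x_4\}$ and $\{x_2,x_3\}$, each of score $\tfrac12$; the cells of $\{f'\}$ are $\{x_1,x_2\}$ and $\{x_3,x_4\}$, again each of score $\tfrac12$. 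Since every classifier must be constant on each cell and each cell has an equal number of $0$- and $1$-labelled points, every $h\in\mathcal{H}_{\{f\}}\cup\mathcal{H}_{\featureset'}$ has $L_P(h)=\tfrac12$. Hence neither feature set is $(\epsilon,\eta)$-fairness-enabling for any $\epsilon<\tfrac12$ (regardless of $\eta$). On the other hand, $\{f,f'\}$ separates the four points into singletons, so the ground-truth classifier $t$ is expressible; $t$ has $L_P(t)=0$ and, as it equals the ground truth, $\mbox{FPR}_g(t,t,P)=\mbox{FNR}_g(t,t,P)=0$ for both $g\in\{A,D\}$, yielding $\EOObj(t)=0$. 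Thus $\featureset'\cup\{f\}$ is $(0,0)$-fairness-enabling.

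The main (very mild) obstacle is just the case check in Part 2(b), namely verifying that all cells induced by $\{f\}$ and by $\{f'\}$ individually have score exactly $\tfrac12$, which forces the loss of every expressible classifier to be $\tfrac12$ under the uniform $P$; once this is in place, the rest of the argument is bookkeeping.
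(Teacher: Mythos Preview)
Your proposal is correct and follows essentially the same approach as the paper: Part~1 via the inclusion $\mathcal{H}_{\featureset}\subseteq\mathcal{H}_{\featureset\cup\{f\}}$, Part~2(a) by choosing $\featureset$ so that $\mathcal{C}_{\featureset}=\mathcal{C}_{\featureset\cup\{f\}}$ (your choice $\featureset=\{f\}$ is the simplest such instance), and Part~2(b) by an XOR-type construction. The only difference is that your Part~2(b) example is on four points rather than the paper's eight, which is a harmless simplification of the same idea.
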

While this section focused on fairness with respect to equalized odds, we note that many of these results can be replicated for other notions of fairness. In particular, analogous statements to Theorem~\ref{thm:featuredeletion:worstcase} and Theorem~\ref{thm:featuredeletion:fairnessdriven} can be made for demographic parity.

\section{Impossibility of adversarially fair representations with respect to predictive rate parity}

We now show that not all acceptable notions of group fairness always allow a adversarially fair representation, even in a single-task setting. One such notion is \emph{predictive rate parity}.

\begin{defn}{(Predictive rate parity (PRP))} A classifier $h$ is considered PRP fair 
w.r.t.\ to a distribution $P$ and if for $x\sim P$ the ground truth $\Gt(x)$ is statistically independent of the group membership $G(x)$, given the classification $h(x)$. We denote this fairness objective with $\PredObjective$.
\end{defn}

\begin{thm}\label{thm:predrate}
Adversarial fairness w.r.t.\ $P$ and $\PredObjective$ is only possible, if $P$ has equal success rates for both groups. 
\end{thm}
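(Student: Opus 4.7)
The plan is to exhibit a single specific classifier that is always available to the adversary and whose PRP fairness alone is already equivalent to equal success rates. The natural candidate is the constant classifier $h \equiv 1$. Since by definition $\mathcal{H}_F = \{h : Z \to \{0,1\}\}$ consists of all functions from the representation space into $\{0,1\}$, this constant $h$ lives in $\mathcal{H}_F$ for every choice of $F$, so no assumption on $F$ is needed.

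Next, I would unfold the PRP definition applied to this $h$. Statistical independence $\Gt(x) \indep G(x) \mid h(x)$ means that for every value $\ell$ with $\Pr_{x\sim P}[h(x)=\ell] > 0$, we have
\[
\Pr_{x\sim P}[\Gt(x)=1 \mid h(x)=\ell,\, G(x)=A] \;=\; \Pr_{x\sim P}[\Gt(x)=1 \mid h(x)=\ell,\, G(x)=D].
\]
For $h \equiv 1$ only $\ell = 1$ has positive probability, so the event $\{h(x)=1\}$ conditions on the whole sample space and the above identity collapses to
\[
\Pr_{x\sim P}[\Gt(x)=1 \mid G(x)=A] \;=\; \Pr_{x\sim P}[\Gt(x)=1 \mid G(x)=D],
\]
which is exactly the equal-success-rates condition stated in Section~\ref{multi-task}.

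Putting the pieces together: if $F$ is adversarially PRP-fair w.r.t.\ $P$, then by definition every $h \in \mathcal{H}_F$ is PRP-fair; specializing to $h \equiv 1$ gives equal success rates, proving the theorem. (Alternatively, one could use $h \equiv 0$ and condition on $\ell=0$ to reach the same conclusion, confirming that the argument does not rely on which constant one picks.)

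I do not anticipate any serious obstacle here: the proof reduces to plugging the simplest classifier that $\mathcal{H}_F$ must contain into the PRP definition. The only mild care required is the convention for conditioning on zero-probability events, namely that independence $\Gt \indep G \mid h$ is read as imposing the equality only for values $\ell$ with $\Pr[h(x)=\ell]>0$; under that standard reading the argument is immediate.
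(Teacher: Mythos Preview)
Your proposal is correct and follows essentially the same argument as the paper: both use that the constant classifier $h\equiv 1$ lies in $\mathcal{H}_F$ for every $F$, and that PRP fairness of this all-one classifier is equivalent to equal success rates. Your version is in fact more explicit than the paper's in spelling out the conditioning and the zero-probability convention, but the idea is identical.
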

\begin{proofof}{Theorem~\ref{thm:predrate}}
We note that in order to achieve adversarial fairness with respect to any representation, the all-one classifier needs to be fair, as any representation $F$ admits any constant classifier. We furthermore note that the all-one classifier is fair with respect to predictive rate parity if and only if the ground truth has equal success rates. This shows our claim.

\end{proofof}

\section{Conclusion}
While many papers in this domain propose algorithmic solutions to fairness related issues, the main contributions of this paper are conceptual. We believe that, to a much larger extent than many other facets of machine learning, fundamental concepts of fairness in machine learning require better understanding.  Some basic questions are still far from being satisfactorily elucidated; What should be considered fair decision making? (various mutually incompatible notions have been proposed, but how to pick between them for a given real life application is far from being clarified). What is a fair data representation? To what extent should accuracy or other practical utilities be compromised for achieving fairness goals? and more.
The answers to these questions are not generic. They vary with the principles and the goals guiding the agents involved (decision makers, subjects of such a decision, policy regulators, etc.), as well as with what can be assumed regarding the underlying learning setup. We view these as the primary issues facing the field, deserving explicit research attention (in addition to the more commonly discussed algorithmic and optimization aspects). \\

Our main result addressed the existence of generic fair representations. We show that even label-independent fairness notions like demographic parity are vulnerable to shifts in marginals between tasks. For fairness notions that do rely on the true classification, we show that fairness and accuracy cannot be simultaneously achieved by the same data representation for any two different tasks even if they are defined over the same marginal (unlabeled) data distributions. We conclude the impossibility of having generic data representations that guarantee (even just) DP fairness with respect to tasks whose marginal distributions are not accessible when designing the representation. 
These insights stand in contrast to the impression arising from many recent papers 
\cite{pmlr-v80-madras18a, DBLP:journals/corr/EdwardsS15, mcnamara2019costs, song2019learning, DBLP:conf/icml/zemmel2019,  pmlr-v80-madras18a} 
that claim to learn transferable fairness-ensuring representations. 
We also considered the question of "fairness of a feature", which has been used in legal scenarios. We showed that the fairness of a single feature is an ill defined notion. Namely, the impact of a feature on the fairness of a decision cannot be determined without considering the other features of the representation\footnote{While we focused on the equalized odds notion of fairness, similar results can be shown for demographic parity (i.e. a feature that has demographic parity by itself can still make a representation demographic-parity unfair (in the adversarial sense) and for other common notions of group fairness. This is simply due to the fact that \emph{pairwise} statistical independence for a set of random variables does not imply statistical independence of the set of random variables.}

One obvious direction for further research is extending our impossibility results to quantitative accuracy-fairness trade-offs and bounds on what a data representation can guarantee over multiple tasks as a function of appropriate measures of task similarities.

 \bibliographystyle{plain}
\bibliography{references}

\newpage

\section*{Appendix}

\subsection*{Additional remarks on Theorem~\ref{thm:deletion}}

Theorem~\ref{thm:deletion} stated that every feature $f$ fulfilling some non-triviality requirements, there exists a distribution $P$ and feature sets $\mathcal{F}$ and $\mathcal{F}'$ such that adding $f$ to either of the feature sets has opposing effects on the accuracy-driven fairness of the respective representations. We will now state a condition on $f$ and $P$ for this phenomenon to occur. It will be easy to see that this condition is fulfilled for a very general class of distributions and features, only excluding pathological examples.

\begin{defn} \label{def:generic}

In the following let $l_1\in \{0,1\}$ denote a label and $G_1\in \{A,D\}$ a group. The opposing label and group will be denoted by $l_2$ and $G_2$ respectively. A pair $(f,P)$ of a feature $f$ and a distribution $P$ is called \emph{generic} if there exist sets $C_1,C_2,C_3\subset X$ with the following properties.
\begin{enumerate}
    \item $P(C_1) > P(C_2)$
    \item $C_1$ and $C_2$ are separated by the feature $f$, i.e. there are $y_1\neq y_2$ such that $C_1\subset f^{-1}(y_1)$ and $C_2\subset f^{-1}(y_2)$
    \item $C_1$ and $C_2$ are label-homogeneous for different labels and $C_2$ is group homogeneous, i.e. $C_1\subset t^{-1}(l_1)$ and $C_2 \subset X_{G_1,l_2}$.
    \item $C_3$ is not split by the feature, i.e. there is $y_3$ such that $C_3 \subset f^{-1}(y_3)$
    \item $C_3$ has the same majority label as $C_1$, i.e. $P(t^{-1}(l_1)\cap C_3 ) \geq P(t^{-1}(l_2)\cap C_3 )  $
    \item The fraction of elements of group $G_2$ and label $l_2$ in $C_3$ is sufficiently big in comparison to $C_2$, i.e. $\frac{P(C_3 \cap X_{G_2, l_2} )}{P(X_{G_2,l_2})} \geq \frac{P((C_2 \cup C_3)\cap X_{G_1,l_2})}{P(X_{G_1,l_2})} $.
\end{enumerate}
\end{defn}
\begin{lem} \label{lem:genericimpliesdeletion}
For every pair generic feature-distribution pair $(f,P)$, there are two feature sets $\mathcal{F}$ and $\mathcal{F}'$
\begin{itemize}
\item The accuracy-driven fairness w.r.t $\EOObjective$, $P$ and $\alpha=0.5$ of $\featureset\cup \{f\}$ is greater than that of $\featureset$, i.e.
    \[ U_{\mbox{acc}}^{\alpha}(\featureset \cup \{f\}) < U_{\mbox{acc}}^{\alpha}(\featureset)\]
    Thus, deleting $f$ in this context will increase unfairness.
   \item
    The accuracy-driven fairness w.r.t $\EOObjective$, $P$ and $\alpha = 0.5$ of $\featureset'\cup \{f\}$ is less than that of $\featureset'$, i.e.
    \[ U_{\mbox{acc}}^{\alpha}(\featureset' \cup \{f\}) > U_{\mbox{acc}}^{\alpha}(\featureset')\]
    Thus, deleting $f$ in this context will decrease unfairness.
\end{itemize}
\end{lem}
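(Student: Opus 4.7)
The plan is to give two explicit feature-set constructions, one per bullet, exploiting the six properties of a generic pair $(f,P)$. In both cases the ``interesting'' cell of $\mathcal{F}$ (resp.\ $\mathcal{F}'$) is assembled from $C_1, C_2, C_3$, and the Bayes-optimal label on that cell either stays put or flips when $f$ is added; this change in behaviour is what moves the EO-unfairness up or down. All other cells are made label-homogeneous (hence correctly classified by the Bayes-optimal predictor), so that the total EO-unfairness is entirely determined by the interesting cell. The $k$-anonymity and non-committing properties of $f$ are exactly what allow these auxiliary cells to exist while the partition still respects $f$'s values.

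\textbf{First inequality.} I would choose $\mathcal{F}$ so that $C_1 \cup C_2$ is one of its cells, with the remaining elements of $X$ partitioned into label-homogeneous cells. By conditions~1 and~3, the Bayes-optimal classifier under $\mathcal{F}$ assigns label $l_1$ to the cell $C_1 \cup C_2$, producing errors exactly on $C_2 \subset X_{G_1,l_2}$. Since these errors lie in a single group, $U_{\mbox{acc}}^{\alpha}(\mathcal{F}) > 0$. Adding $f$ refines $C_1 \cup C_2$ into $C_1$ and $C_2$ by condition~2; both sub-cells are label-homogeneous, so the Bayes-optimal classifier under $\mathcal{F} \cup \{f\}$ is perfectly accurate and $U_{\mbox{acc}}^{\alpha}(\mathcal{F} \cup \{f\}) = 0$, giving the desired strict inequality.

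\textbf{Second inequality.} I would choose $\mathcal{F}'$ so that one of its cells has the form $C_3 \cup E$ for a carefully selected set $E \subset X_{G_2,l_2}$ lying on a different $f$-value than $y_3$ (feasible by condition~4 plus the non-committing property). The rest of $X$ is again partitioned into label-homogeneous cells. By condition~5 and a suitable choice of $P(E)$, the Bayes-optimal label of $C_3 \cup E$ remains $l_1$, so the errors under $\mathcal{F}'$ are $(C_3 \cap t^{-1}(l_2)) \cup E$. Condition~6 is precisely the quantitative inequality that lets these two pieces of error balance across groups $A$ and $D$, so the resulting FNR/FPR gaps vanish and $U_{\mbox{acc}}^{\alpha}(\mathcal{F}') = 0$. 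Adding $f$ splits $C_3 \cup E$ into $C_3$ (still Bayes-labeled $l_1$, because $f$ does not refine $C_3$ by condition~4) and $E$ (now pure $l_2$, hence Bayes-labeled $l_2$); the $E$-errors disappear but the $C_3$-errors remain, concentrated on a single group, and $U_{\mbox{acc}}^{\alpha}(\mathcal{F}' \cup \{f\}) > 0$, proving the second inequality.

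\textbf{Main obstacle.} The delicate step is the second construction: the mass $P(E)$ must simultaneously be (i) large enough that the $\mathcal{F}'$-errors on groups $A$ and $D$ actually balance, which is where condition~6 is used; (ii) small enough relative to $P(C_3 \cap t^{-1}(l_1))$ that $l_1$ stays the Bayes-optimal label of $C_3 \cup E$ under $\mathcal{F}'$; and (iii) consistent with the existence of sufficiently many $G_2$-side $l_2$-instances outside $C_3$ that lie on an $f$-value other than $y_3$. Once a valid $E$ is exhibited, the two inequalities reduce to direct FPR/FNR arithmetic on the cells. If condition~6 holds with equality I would argue by a small perturbation of $P(E)$ to turn the balance into a strict inequality; if it is strict, the two constructions work off the shelf.
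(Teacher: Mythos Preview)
Your first construction for $\mathcal{F}$ is exactly the paper's: take $C_1\cup C_2$ as one cell and make every other cell label-homogeneous; adding $f$ splits that cell into two label-pure pieces, so $U_{\mbox{acc}}^{0.5}(\mathcal{F}\cup\{f\})=0<U_{\mbox{acc}}^{0.5}(\mathcal{F})$.

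The second construction, however, has a real gap. First, you cannot ``choose $P(E)$'': the distribution $P$ is part of the given generic pair $(f,P)$, and the lemma only lets you pick $\mathcal{F}$ and $\mathcal{F}'$. You may pick the \emph{set} $E$, but nothing in the definition of ``generic'' guarantees that any subset of $X_{G_2,l_2}$ on an $f$-value different from $y_3$ has $P$-mass in the window you need. Second, and more seriously, the side on which you put $E$ is wrong. With $E\subset X_{G_2,l_2}$ and the single non-trivial cell $C_3\cup E$ Bayes-labelled $l_1$, the $l_2$-misclassification rates are
\[
\frac{P(C_3\cap X_{G_2,l_2})+P(E)}{P(X_{G_2,l_2})}\quad\text{for }G_2,\qquad
\frac{P(C_3\cap X_{G_1,l_2})}{P(X_{G_1,l_2})}\quad\text{for }G_1.
\]
But condition~(6) already gives $\frac{P(C_3\cap X_{G_2,l_2})}{P(X_{G_2,l_2})}\ge \frac{P((C_2\cup C_3)\cap X_{G_1,l_2})}{P(X_{G_1,l_2})}\ge \frac{P(C_3\cap X_{G_1,l_2})}{P(X_{G_1,l_2})}$, so the $G_2$ rate dominates the $G_1$ rate \emph{before} $E$ is added; adding $G_2$-mass only widens the gap. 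Hence your $\mathcal{F}'$ actually satisfies $U_{\mbox{acc}}^{0.5}(\mathcal{F}')\ge U_{\mbox{acc}}^{0.5}(\mathcal{F}'\cup\{f\})$, the opposite of what you want. (Your claim that the $C_3$-errors after adding $f$ are ``concentrated on a single group'' is also unjustified: nothing prevents $C_3\cap X_{G_1,l_2}$ from being nonempty.)

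The paper's route avoids all of this by \emph{reusing} the cell $C_1\cup C_2$. It takes $\mathcal{F}'$ to have two non-trivial cells, $C_1\cup C_2$ and $C_3$, with everything else label-separated. Both cells are Bayes-labelled $l_1$, so under $\mathcal{F}'$ the $l_2$-errors sit on $C_2\cup(C_3\cap t^{-1}(l_2))$, giving group rates $a=\tfrac{P(C_3\cap X_{G_2,l_2})}{P(X_{G_2,l_2})}$ and $b=\tfrac{P((C_2\cup C_3)\cap X_{G_1,l_2})}{P(X_{G_1,l_2})}$. Adding $f$ splits $C_1\cup C_2$ (condition~2) but not $C_3$ (condition~4), so the $C_2$-errors vanish and the $G_1$ rate drops to $c=\tfrac{P(C_3\cap X_{G_1,l_2})}{P(X_{G_1,l_2})}$. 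Since $a\ge b>c$ by condition~(6) and $C_2\subset X_{G_1,l_2}$, one gets $|a-c|>|a-b|$ directly. No balancing to zero and no auxiliary set $E$ are needed; $C_2$ itself is the compensating mass, and condition~(6) is used as a single inequality rather than as an equation to be solved.
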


\begin{proof}
We define $\featureset$ as a representation which separates everything but a cell $C' = C_1\cup C_2$ by labels. For such a representation $\featureset\cup \{f\}$ enables perfect accuracy and therefore perfect fairness. However $\featureset$ is constructed in a way such that thresholding at $0.5$ leads to unfair classification, as only elements of $X_{G_1,l_2}$ are misclassified. Furthermore we can define $\featureset'$ as a representation that separates all but two cells $C' = C_1\cup C_2$ and $C'' = C_3$ perfectly by labels. As the only misclassification of Bayes classifier $t_{P, \mathcal{F}'\cup \{f\}}^{0.5}$ occurs on $C_3$ and it labels $t_{P, \mathcal{F}'\cup \{f\}}^{0.5} =l$ it has unfairness $L_P^{EO} (t_{P, \mathcal{F}'\cup \{f\}}^{0.5}) = \frac{1}{2} | \frac{P(C_3 \cap X_{G_2,l_2} )}{P(X_{G_2,l_2})} - \frac{P(C_3 \cap X_{G_1,l_2} )}{P(X_{G_1,l_2})} |$.
Furthermore the only misclassification for the Bayes classifier $t_{P, \mathcal{F}'}^{0.5}$  occurs on  $C_2$ and $C_3$ which are both labeled as $l$, yielding the unfairness
$t_{P, \mathcal{F}'}^{0.5} =l$ it has unfairness $L_P^{EO} (t_{P, \mathcal{F}'}^{0.5}) = \frac{1}{2} | \frac{P(C_3 \cap X_{G_2,l_2} )}{P(X_{G_2,l_2})} - \frac{P((C_2 \cup C_3 )\cap X_{G_1,l_2} )}{P(X_{G_1,l_2})} |$.
As $\frac{P((C_2 \cup C_3 )\cap X_{G_1,l_2} )}{P(X_{G_1,l_2})}  >  \frac{P( C_3 \cap X_{G_1,l_2} )}{P(X_{G_1,l_2})}$, by property (6.) of Definition~\ref{def:generic}, we thus get $L_P^{EO} (t_{P, \mathcal{F}'\cup \{f\}}^{0.5}) > L_P^{EO} (t_{P, \mathcal{F}'}^{0.5})$, concluding our proof.
\end{proof}

we wil now see how the non-triviality criteria for a feature $f$ from Theorem~\ref{thm:deletion} imply the existence of a generic pair $(f,P)$.

\begin{lem} \label{lem:nontrivialimpliesgeneric}
For every non-committing, $2$-anonymous feature $f$, there exists a distribution $P$, such that the pair $(f,P)$ is generic.

\end{lem}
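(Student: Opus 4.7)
The plan is to construct the witness distribution $P$ and the sets $C_1, C_2, C_3$ directly from the non-committing property of $f$ (the $2$-anonymity hypothesis will not actually be needed for this lemma; it presumably plays its role later, in the construction of the auxiliary feature sets inside the proof of Lemma~\ref{lem:genericimpliesdeletion}).

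First, non-committing yields two distinct values $y_1 \neq y_2$ of $f$ such that each of the eight intersections $f^{-1}(y_i) \cap X_{g,l}$, for $i\in\{1,2\}$, $g\in\{A,D\}$, $l\in\{0,1\}$, is non-empty. I would then instantiate the parameters of the generic definition as $l_1 = 1$, $l_2 = 0$, $G_1 = A$, $G_2 = D$, and $y_3 = y_2$. Pick representative points $x_1 \in X_{A,1}\cap f^{-1}(y_1)$, $x_2 \in X_{A,0}\cap f^{-1}(y_2)$, $x_3 \in X_{D,0}\cap f^{-1}(y_2)$, $x_4 \in X_{D,1}\cap f^{-1}(y_2)$; these four points are automatically pairwise distinct because they lie in disjoint $X_{g,l}$ blocks. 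Declare $C_1 = \{x_1\}$, $C_2 = \{x_2\}$, $C_3 = \{x_3, x_4\}$.

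Next, define $P$ to assign mass $1$ to each of $x_1, x_3, x_4$, mass $\epsilon\in(0,1)$ to $x_2$, zero mass to every other element of $X_{A,0} \cup X_{D,0}$, and arbitrary (possibly zero) mass to the remaining elements of $X$; then normalize. Properties (1)--(5) of the \emph{generic} definition then reduce to inspection: (1) from $1 > \epsilon$; (2) from $y_1 \neq y_2$; (3) from $C_1 \subset X_{A,1} \subset t^{-1}(1)$ and $C_2 \subset X_{A,0} = X_{G_1,l_2}$; (4) from $C_3 \subset f^{-1}(y_2) = f^{-1}(y_3)$; and (5) from $P(\{x_4\}) = P(\{x_3\}) = 1$.

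The one step that needs a moment of care---and the main (rather mild) obstacle---is property (6). The key design choice is to concentrate all of the mass of $X_{D,0}$ on the single point $x_3 \in C_3$, which forces the left-hand ratio $P(C_3 \cap X_{D,0})/P(X_{D,0})$ to equal $1$. Since $C_3$ is disjoint from $A$, the right-hand side collapses to $P(\{x_2\})/P(X_{A,0})$, which is at most $1$; hence (6) holds (with equality in the worst case, when no other element of $X_{A,0}$ carries mass). Everything else is direct verification against the generic definition.
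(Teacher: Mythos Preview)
Your argument is correct. The verification of properties (1)--(6) goes through exactly as you describe, and in particular your design choice of putting all of the $X_{D,0}$-mass on the single point $x_3\in C_3$ makes (6) immediate (with equality, which the definition permits).

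Your route differs from the paper's in two respects worth noting. First, the paper places $C_3$ in the $y_1$-fiber (so $y_3=y_1$) and insists that $C_3$ meet all four blocks $X_{A,0},X_{A,1},X_{D,0},X_{D,1}$; you instead put $C_3\subset f^{-1}(y_2)\cap D$, which kills the term $C_3\cap X_{A,0}$ in (6) and simplifies the check. Second, and more interestingly, the paper actually \emph{uses} the $2$-anonymity hypothesis: it splits $B=f^{-1}(y_2)\cap X_{A,0}$ into $C_2$ and a buffer set $C_4$, so that $P(X_{A,0})>P(C_2)$ and (6) becomes strict. Your construction shows that this extra step is unnecessary for the lemma as stated, since equality in (6) already suffices for ``generic''; so you have in effect proved the slightly stronger statement that non-committing alone implies the existence of a generic pair. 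The paper's approach buys a strict margin in (6), but that margin is not exploited downstream (Lemma~\ref{lem:genericimpliesdeletion} only needs $a\geq c>b$ in the obvious notation, and $c>b$ comes from $P(C_2)>0$, not from $2$-anonymity).
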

\begin{proof}
We need to show that it is possible to define three sets $C_1,C_2,C_3,C_4\subset X$ and a distribution $P$ such that the requirements of Definition~\ref{def:generic} are fulfilled. From the fact that $f$ is non-committing we know that there are $y_1,y_2$ such that none of the subsets $f^{-1}(y_1) \cap X_{i}$ and $f^{-1}(y_2) \cap X_{i}$ is empty for any $X_i\in \{X_{A,0},X_{A,1},X_{D,1},X_{D,0}\}$. We can thus define the non-empty set $B=f^{-1}(y_2) \cap X_{A,0}$. Furthermore, we know that $f$ is also $2$-anonymous and thus we can split $B$ further into two non-empty subsets $C_2$ and $C_4$. Furthermore, we can define $C_1$ and $C_3$ as disjoint non-empty subsets of $f^{-1}(y_1)$, such that $C_1\subset f^{-1}(y_1) \cap t^{-1}(1)$ and such that $C_3\cap X_i \neq \emptyset$ for any $X_i \in \{X_{A,0},X_{A,1},X_{D,1},X_{D,0}\}$. Thus the properties (2.), (3.) and (4.) of the non-generic definition are fulfilled for the sets $C_1,C_2,C_3$.

We can now choose $P$ to pick probability weights as follows:
\begin{itemize}
    \item $P(C_1) =0.2 $
    \item $P(C_2) = 0.1$
    \item $P(C_3 \cap t^{-1}(1)) = 0.3 $
    \item $P(C_3 \cap X_{D,0} ) =0.2 $
    \item $P(C_4) =  0.2$
\end{itemize}
\end{proof}
Clearly (1.) is fulfilled as $P(C_1) =0.2 > 0.1 =P(C_2)$. Furthermore (5.) is fulfilled as, $P(C_3 \cap t^{-1}(1)) = 0.3 > 0.2 =P(C_3 \cap X_{D,0} )  = P(C_3 \cap t^{-1}(0) ) $.
Lastly, (6.) is fulfilled as:
\[\frac{P(C_3\cap X_{D,0})}{P(X_{D,0})} = 1 < \frac{1}{3} = \frac{P(C_2\cap X_{A,0})}{P(X_{A,0})} \]

\section*{Proofs}

\begin{proofof}{Theorem~\ref{thm:deletion}}
The result follows directly from Lemma~\ref{lem:nontrivialimpliesgeneric} and Lemma~\ref{lem:genericimpliesdeletion}.
\end{proofof} 

\begin{proofof}{Theorem~\ref{thm:featuredeletion:worstcase}}
\begin{enumerate}
\item We note that $\mathcal{H}_{\mathcal{F}} \subset \mathcal{H}_{\mathcal{F}\cup \{f\}}$. Thus any  $\arg \min_{h\in \mathcal{H}_{\mathcal{F}} } \EOObj(h) \leq \arg \min_{h\in \mathcal{H}_{\mathcal{F}\cup \{f\}} } \EOObj(h) $, proving the inequality for adversarial fairness. 
    \item For any distribution $P$ and feature $f$ we can choose a representation $\mathcal{F}$ such that $\cells = \mathcal{C}_{\mathcal{F}\cup \{f\}}$. It is obvious that the fairness will not change between those representations.
    \item The following example establishes the second claim:
    
Consider the domain $X=\{x_1,x_2,x_3,x_4,x_5,x_6,x_7,x_8\}$ with $X_{A,1}=\{x_1,x_2\}$ ,$X_{D,1}=\{x_3,x_4\}$, $X_{A,0}=\{x_5,x_6\}$ and $X_{D,0}=\{x_7,x_8\}$,. Furthermore let $\mathcal{F} = \{f_1,f_2\}$ with $f_1^{-1}(1) = \{x_1,x_3,x_5,x_7\}$ and $f_2^{-1}(1) = \{x_1,x_4,x_5,x_8\}$. Furthermore let $P$ be uniform over $X$,i.e. $P(\{x_1\}) = P(\{x_2\}) = P(\{x_3\}) = P(\{x_4\}) = P(\{x_5\}) = P(\{x_6\}) = P(\{x_7\}) = P(\{x_8\})= 0.125$. Thus, we have adversarial fairness w.r.t.\ EO for both features, i.e.
\[ \frac{P(X_{A,1} \cap f_1^{-1}(1))}{P(\Gt^{-1}(1)\cap f_1^{-1}(1))} = \frac{P(\{x_1\})}{P(\{x_1,x_2\})} = 0.5 = \]
\[\frac{P(\{x_3\})}{P(\{x_3,x_4\})} = \frac{P(X_{D,1} \cap f_1^{-1}(1))}{P(\Gt^{-1}(1)\cap f_1^{-1}(1))}. \]
\[ \frac{P(X_{A,0} \cap f_1^{-1}(1))}{P(\Gt^{-1}(0)\cap f_1^{-1}(1))} =\frac{P(\{x_5\})}{P(\{x_5,x_6\})} = 0.5=\]
\[= \frac{P(\{x_7\})}{P(\{x_7,x_8\})} = \frac{P(X_{D,0} \cap f_1^{-1}(1))}{P(\Gt^{-1}(0)\cap f_1^{-1}(1))}. \]
\[ \frac{P(X_{A,1} \cap f_2^{-1}(1))}{P(\Gt^{-1}(1)\cap f_2^{-1}(1))} \frac{P(\{x_1\})}{P(\{x_1,x_2\})} = 0.5 =\]
\[= \frac{P(\{x_4\})}{P(\{x_3,x_4\})} = \frac{P(X_{D,1} \cap f_2^{-1}(1))}{P(\Gt^{-1}(1)\cap f_2^{-1}(1))}. \]
\[ \frac{P(X_{A,0} \cap f_2^{-1}(1))}{P(\Gt^{-1}(0)\cap f_2^{-1}(1))} \frac{P(\{x_5\})}{P(\{x_5,x_6\})} = 0.5 \]
\[= \frac{P(\{x_8\})}{P(\{x_7,x_8\})} =\frac{P(X_{D,0} \cap f_2^{-1}(1))}{P(\Gt^{-1}(0)\cap f_2^{-1}(1))}. \]
However, the featureset $\featureset$ does not have adversarial fairness w.r.t.\ EO: $\cells = \{C_1,C_2,C_3,C_4\}$ with $C_{1} = \{x_1,x_5\}$, $C_{2} = \{x_2,x_6\}$, $C_{3} = \{x_3,x_7\}$, and $C_{4} = \{x_4,x_8\}$. Consider the classifier $h\in\mathcal{H}_{\mathcal{F}}$ with $h^{-1}(1)= \{C_1,C_2 \} $. Then
$\EOObj(h) = \frac{1}{2} \sum_{l\in\{0,1\}}\left |\frac{P(h^{-1}(|1-l|) \cap X_{A,l})}{P(X_{A,l})} - \frac{P(h^{-1}(|1-l|) \cap X_{D,l})}{P(X_{D,l})} \right | = \frac{1}{2}(|1 - 0 | + |0 - 1|) = 1 $. Thus $U_{\mbox{adv}}^{EO}(\featureset) = 1$.
\end{enumerate}
\end{proofof}

\begin{proofof}{Theorem~\ref{thm:featuredeletion:fairnessdriven}}
\begin{enumerate}
\item We note that $\mathcal{H}_{\mathcal{F}} \subset \mathcal{H}_{\mathcal{F}\cup \{f\}}$. Any $h\in\mathcal{H}_{\mathcal{F}} $  with  $\epsilon$ loss and $\eta$ unfairness, is also an element of $h\in\mathcal{H}_{\mathcal{F}\cup \{f\}}$, proving our claim.

 \item Similarly to the proof of Theorem~\ref{thm:featuredeletion:worstcase}, given a feature $f$ and a distribution $P$,we can construct a feature set $\featureset$, such that $\cells = \mathcal{C}_{\featureset \cup \{f\}}$. Since this implies that $\mathcal{H}_{\mathcal{F}} = \mathcal{H}_{\mathcal{\featureset \cup \{f\}}}$, we get the same fairness-enabling for both distributions.\\
Furthermore we can construct the following example to proof the second claim:
Consider the domain $X=\{x_1,x_2,x_3,x_4,x_5,x_6,x_7,x_8\}$ with $X_{A,1}=\{x_1,x_2\}$ ,$X_{D,1}=\{x_3,x_4\}$, $X_{A,0}=\{x_5,x_6\}$ and $X_{D,0}=\{x_7,x_8\}$,. Furthermore let $\mathcal{F} = \{f_1,f_2\}$ with $f_1^{-1}(1) = \{x_1,x_3,x_5,x_7\}$ and $f_2^{-1}(1) = \{x_1,x_3,x_6,x_8\}$. Furthermore let $P$ be uniform over $X$,i.e. $P(\{x_1\}) = P(\{x_2\}) = P(\{x_3\}) = P(\{x_4\}) = P(\{x_5\}) = P(\{x_6\}) = P(\{x_7\}) = P(\{x_8\})= 0.125$.
On both $\{f_1\}$ and $\{f_2\}$ there are no classifier $h_1\in \mathcal{H}_{\mathcal{\{f_1\}}}$ or $h_2\in \mathcal{H}_{\mathcal{\{f_2\}}}$ with $L_P^{\alpha}(h_1) < \frac{1}{2}$ or $L_P^{\alpha}(h_2) < \frac{1}{2}$ respectively for any $\alpha\in (0,1)$. Therefore $\{f_1\}$ and $\{f_2\}$ are both not $(\epsilon,\eta)$-best case fair for any $\epsilon,\eta <\frac{1}{2}$.
Furthermore, the classifier $h_3$ defined by $h^{-1}(1) = \{x_1,x_2,x_3,x_4\}$ is element of $\mathcal{H}_{\{f_1\}\cup\{f_2\}}$ and has loss $L_P^{\alpha}(h) = 0$ and unfairness $\EOObj(h)  = 0$. Thus $\{f_1\}\cup \{f_2\}$ is $(0,0)$-fairness-enabling for any $\alpha\in (0,1)$.
\end{enumerate}
\end{proofof}

\end{document}